\documentclass[10pt]{article} 
\usepackage[accepted]{tmlr}


\usepackage{natbib}
\usepackage{microtype}
\usepackage{graphicx}
\usepackage{subfigure}
\usepackage{booktabs} 
\usepackage{authblk}
\usepackage{titlesec}
\usepackage{xcolor}

\usepackage{hyperref}

\usepackage{amsmath}
\usepackage{amssymb}
\usepackage{mathtools}
\usepackage{amsthm}

\usepackage{bm}

\usepackage{bm}
\usepackage{algorithm}

\let\AND\relax

\usepackage{algorithmic}


\usepackage{multirow}

\newcommand{\trp}{{^\top}} 

\renewcommand{\eqref}[1]{eq.~\ref{eq:#1}}
\newcommand{\Nrm}{\mathcal{N}}

\newcommand{\secref}[1]{Sec.~\ref{sec:#1}}  

\newcommand{\tabref}[1]{Table ~\ref{tab:#1}}  
\newcommand{\algoref}[1]{Algorithm~\ref{algo:#1}}  
\newcommand{\thmref}[1]{Thm.~\ref{thm:#1}}  
\newcommand{\defnref}[1]{Def.~\ref{defn:#1}} 
\newcommand{\propref}[1]{Prop.~\ref{prop:#1}}

\newcommand{\suppsecref}[1]{Sec.~\ref{supp:#1} in Appendix}

\newcommand{\vx}{\mathbf{x}}
\newcommand{\vn}{\mathbf{n}}
\newcommand{\va}{\mathbf{a}}

\newcommand{\Dat}{\mathcal{D}}

\newcommand{\vv}{\mathbf{v}}

\newcommand{\vb}{\mathbf{b}}

\newcommand{\veta}{\mathbf{\ensuremath{\bm{\eta}}}}

\newcommand{\E}{\mathbf{E}}
 
\newcommand{\vm}{\mathbf{m}}
\newcommand{\vz}{\mathbf{z}}
\newcommand{\vw}{\mathbf{w}}

\newcommand{\ve}{\mathbf{e}}

\newcommand{\vmu}{\mathbf{\ensuremath{\bm{\mu}}}}
\newcommand{\vtheta}{\mathbf{\ensuremath{\bm{\theta}}}}

\newcommand{\vy}{\mathbf{y}}
\newcommand{\vh}{\mathbf{h}}

\newtheorem{thm}{Theorem}[section]

\newtheorem{prop}{Proposition}
\newtheorem{defn}{Definition}[section]

\title{Differentially Private Stochastic Expectation Propagation}


\author{\name Margarita Vinaroz \email margarita.vinaroz@tuebingen.mpg.de \\
      \addr 
      University of T\"ubingen
      \\
      International Max Planck Research School for Intelligent Systems (IMPRS-IS) \\
      \AND
      \name Mi Jung Park \email mijungp@cs.ubc.ca \\
      \addr University of British Columbia \\
      CIFAR AI Chair at AMII
      }



\begin{document}

\maketitle

\begin{abstract}
 We are interested in privatizing an approximate posterior inference algorithm, called \textit{Expectation Propagation (EP)}. EP approximates the posterior distribution by 
%
iteratively refining approximations to the \textit{local} likelihood terms. By doing so, EP typically provides better posterior uncertainties than \textit{variational inference (VI)} which \textit{globally} approximates the likelihood term.
%
However, EP needs a large memory to maintain all local approximations associated with each datapoint in the training data. 
%
%
To overcome this challenge, \textit{stochastic expectation propagation (SEP)} considers a single unique local factor that captures the average effect of each likelihood term to the posterior and refines it in a way analogous to EP. 
In terms of privatization, SEP is more tractable than EP. It is because at each factor's refining step we fix the remaining factors, where these factors are independent of other datapoints, which is different from EP. 
This independence makes the sensitivity analysis straightforward.
%
%
%
We provide a theoretical analysis of the privacy-accuracy trade-off in the posterior distributions under our method, which we call \textit{differentially private stochastic expectation propagation (DP-SEP)}. Furthermore, we test the DP-SEP algorithm on both synthetic and real-world datasets and evaluate the quality of posterior estimates at different levels of guaranteed privacy.
\end{abstract}

\section{Introduction}

Bayesian learning provides a level of uncertainty about a model through the posterior distribution over the parameters. 
The posterior distribution then provides a level of uncertainty about the model's prediction through the posterior predictive distribution. %
Variational inference (VI) \citep{Beal2003, jordan99} is a popular Bayesian inference method  that refines a global approximation of the posterior and scales well to applications with large datasets. However, VI often underestimates the variance of the posterior and performs poorly for models with non-smooth likelihoods \citep{cunningham2013gaussian,turner2011}. 

In contrast, expectation propagation (EP) is known to provide better posterior uncertainties than VI \citep{minka_ep, opper_ep}. EP constructs the approximate posterior by iterating local computations
that refine approximating factors, where each factor captures each likelihood's contribution to the posterior.
With large datasets, however, using EP imposes challenges as maintaining each of the local approximates in memory is costly. Stochastic expectation propagation (SEP) \citep{sep} overcomes this challenge by iteratively refining a single approximating factor that is repeated as many times as the number of datapoints that are in the dataset. 

The idea behind SEP is that the unique factor captures the average effect of the likelihoods to the posterior. Employing a single approximating factor makes the algorithm suitable for large-scale datasets as it needs to keep the global approximating factor only, as opposed to EP that needs to keep all the approximating factors in memory. While SEP is not exactly EP but approximates EP, SEP is known to provide the posterior uncertainties very close to the ones in EP.

Despite EP and SEP's advantage over VI,  the standard form of the algorithms  unfortunately cannot guarantee privacy for each individual in the dataset. 
In particular, EP and SEP enables  approximate Bayesian inference under a broad class of models such as generalized linear models and mixed models. 
However, when these models are trained with privacy-sensitive data, the approximate posteriors can potentially leak information on 
the training examples. 
%


Differential privacy (DP) \citep{Dwork14} has become the gold standard for providing privacy and is widely used in diverse applications from medicine to social science.
To apply DP to EP, a difficulty arises in sensitivity analysis: at each step of the algorithm, the approximating factor that is being refined depends on the rest of the other factors where these other factors are functions of training data. Hence, the sensitivity of the approximate posterior depends not only the particular factor that is being refined but also the rest of the factors that contribute to the posterior.
Due to this fact, it is challenging to obtain the nice property of sensitivity scaling with $\frac{1}{N}$, where $N$ is the number of datapoints in the training data. 

On the other hand, in every SEP step it considers a single approximating factor at a time while all the other factors are fixed to the same values either at the initial step, or at the previous training step. Hence, the sensitivity analysis of the approximate posterior becomes straightforward. 
In particular, the natural parameters of the approximate posterior under SEP is a linear sum of those corresponding to the likelihood factors and prior. Considering that each of the approximating factors and prior parameters are norm bounded by a constant $C$ (otherwise we can clip them to have norm $C$), then the sensitivity of the natural parameters of the approximate posterior becomes proportional to $\frac{C}{N}$ (see \secref{DPSEP}).

Taken together, we summarize our contribution of this paper. 
\begin{itemize}
\item To the best of our knowledge, we provide the first differentially-private version of the stochastic expectation propagation algorithm, called \textit{DP-SEP}, which is scalable for large datasets and also privacy-preserving. 
\item We provide a  theoretical analysis of the privacy-accuracy trade-off by computing the worst-case KL divergence between the private and non-private posterior distributions.
\item We also provide experimental results applied to a synthetic dataset for a mixture-of-Gaussian model and several real-world datasets for a Bayesian neural network model. 
\end{itemize}


In what follows, we provide background information on expectation propagation, stochastic expectation propagation and differential privacy in \secref{background}. We then describe our DP-SEP algorithm in \secref{DPSEP}. In \secref{Dkl}, we analyze the effect of noise added to the natural parameters on the accuracy of the differentially private posterior distributions. We describe related work in \secref{related}.
Finally, we demonstrate the performance of DP-SEP in relation to other posterior inference methods such as VI, EP, and SEP in \secref{experiment}.


\section{Background}\label{sec:background}

In the following,  we describe EP and SEP algorithms, differential privacy and its properties that we will use to develope our algorithm in \secref{DPSEP}.

\subsection{Expectation propagation (EP)}

We consider a dataset $\mathcal{D} = \{ \bm{x}_{n}\}_{n=1}^N$ containing $N$
 i.i.d samples. Given the dataset, we pick a model parameterized by $\bm{\theta}$. 
 We denote the likelihood of a datapoint $\bm{x}_n$ given the model by $p(\bm{x}_n \lvert \bm{\theta})$ and the prior distribution over the parameters by  $p_{0}(\bm{\theta})$. 
 The true (intractable) posterior  distribution is proportional to the product of the prior and the likelihood, given by:
 \begin{align}
 \label{eq:true}
 p(\bm{\theta} \lvert \mathcal{D} )
 &\propto p_{0}(\bm{\theta}) \prod_{n=1}^{N} p(\bm{x}_{n} \rvert \bm{\theta}).
 \end{align}
EP is an iterative algorithm that produces a simpler and tractable approximate posterior distribution, $q(\bm{\theta})$, by refining the approximating factors $f_{n}(\bm{\theta})$ associated with each datapoint, given by: 
 \begin{align}
 \label{eq:prob_model}
 p(\bm{\theta} \lvert \mathcal{D} )
 &\approx q(\bm{\theta}) \propto  p_{0} (\bm{\theta} ) \prod_{n=1}^{N} f_{n}(\bm{\theta}).
 \end{align}
EP refines these factors iteratively, as shown in \algoref{algorithm_EP}. Firstly, we initialize the approximating factors and form the cavity distribution $q_{-n}(\bm{\theta})$ by taking the n-th approximating factor out from the approximated posterior (i.e $q_{-n}(\bm{\theta}) \propto q(\bm{\theta}) /f_{n}(\bm{\theta})$).
Secondly, we compute the tilted distribution, $\tilde{p}_{n}(\bm{\theta})$, by including the corresponding likelihood term to the cavity distribution: $\tilde{p}_{n}(\bm{\theta}) \propto q_{-n}(\bm{\theta}) p(\bm{x}_{n} \lvert \bm{\theta})$. 
Thirdly, we update the approximating factor by minimizing the Kullback-Leibler (KL) divergence between the tilted distribution and $q_{-n}(\bm{\theta})f_n(\bm{\theta})$ to capture the likelihood term's contribution to the posterior.
When the approximate distribution belongs to the exponential family, the KL minimization reduces to moment matching \citep{moment_matching}, denoted by: $f_{n} (\bm{\theta}) \leftarrow \mbox{proj}[\tilde p(\bm{\theta})] / q_{-n} (\bm{\theta})$. 
Finally, we add the refined factor $f_{n} (\bm{\theta})$ to the approximate posterior in the inclusion step. We repeat this process until some convergence criterion is satisfied. 

\begin{algorithm}[t]
    \caption{EP}\label{algo:algorithm_EP}
    \begin{algorithmic}[1]
    \STATE Choose a factor $f_{n}$ to refine
    \STATE Compute the cavity distribution 
    
    $q_{-n}(\bm{\theta}) \propto q(\bm{\theta})/f_{n}(\bm{\theta})$
    
    \STATE Compute the tilted distribution
    
    $\tilde{p}_{n}(\bm{\theta}) \propto p(\bm{x}_{n} \lvert \bm{\theta})q_{-n}(\bm{\theta})$
    
    \STATE Moment matching
    
    $f_{n}(\bm{\theta}) \leftarrow \mbox{proj}[\tilde p_{n}(\bm{\theta})] / q_{-n} (\bm{\theta})$
    
    \STATE Inclusion
    
    $q(\bm{\theta}) \leftarrow q_{-n} (\bm{\theta}) f_{n}(\bm{\theta}) $
    
    
    \end{algorithmic}
\end{algorithm}

\begin{algorithm}[t]
    \caption{SEP}\label{algo:algorithm_SEP}
    \begin{algorithmic}[1]
    \STATE Choose a datapoint  $\bm{x}_n \sim \mathcal{D}$ 
    
    \STATE Compute the cavity distribution 
    
    $q_{-1}(\bm{\theta}) \propto q(\bm{\theta})/f(\bm{\theta})$
    
    \STATE Compute the tilted distribution
    
    $\tilde{p}_{n}(\bm{\theta}) \propto p(\bm{x}_{n} \lvert \bm{\theta})q_{-1}(\bm{\theta})$
    
    \STATE Moment matching
    
    $f_{n}(\bm{\theta}) \leftarrow \mbox{proj}[\tilde p_{n}(\bm{\theta})] / q_{-1} (\bm{\theta}) $

    \STATE Implicit update
    
    $f(\bm{\theta}) \leftarrow f(\bm{\theta})^{1-\frac{\gamma}{N}} f_{n}(\bm{\theta})^{\frac{\gamma}{N}}$
    
    \STATE Inclusion
    
    $q(\bm{\theta}) \leftarrow q_{-1} (\bm{\theta}) f(\bm{\theta}) $

    \end{algorithmic}
\end{algorithm}

\subsection{Stochastic EP (SEP)}
A major difference between EP and SEP is that SEP constructs an approximate posterior, $q(\bm{\theta})$, by iteratively refining $N$ copies of \textit{a unique factor}, $f(\bm{\theta})$, such that $ \prod_{n=1}^{N} p(\bm{x}_{n} \rvert \bm{\theta}) \approx f(\bm{\theta})^{N}$. The intuition behind SEP is that the approximating factor captures the average effect of the likelihood on the  posterior distribution, since the updates are performed analogously to EP.
 %

Similar to EP, as shown in \algoref{algorithm_SEP}, we start by initializing the approximating factor and computing the cavity distribution by removing the factor from the approximate posterior: $q_{-1}(\bm{\theta}) \propto q(\bm{\theta}) / f(\bm{\theta})$. 
Note that unlike EP, where the cavity distribution involve a datapoint's index $n$, this cavity distribution is independent of a data index, as we obtain it by removing one datapoint's average worth -- determined by $f(\vtheta)$ -- from the posterior distribution.  

We then calculate the tilted distribution in the same way as in EP by $\tilde p_{n}(\bm{\theta}) \propto q_{-1}(\bm{\theta}) p(\bm{x}_n \lvert \bm{\theta})$. In the third step, we minimize the KL-divergence between the tilted distribution and $q_{-1}(\bm{\theta})f_{n} (\bm{\theta})$ to find an intermediate factor, $f_{n}(\bm{\theta})$. In the last step, we update the factor with a damping rate  $\gamma/N$: $f(\bm{\theta}) \leftarrow f(\bm{\theta})^{1- \gamma/N} f_{n} (\bm{\theta})^{\gamma/N} $. A common choice for the damping factor is $1/N$ because it can be seen as minimizing the KL divergence between the tilted distribution and $p_{0}(\bm{\theta})f(\bm{\theta})^{N}$.  

In the last step of the algorithm, we include the refined factor in the approximate posterior. We repeat these steps until convergence. The SEP algorithm reduces the storage requirement compared to EP as it only maintains the global approximation, where the following holds: 
  \begin{align}
     f(\bm{\theta}) \propto (q(\bm{\theta})/ p_{0}(\bm{\theta}))^{\frac{1}{N}} \label{eq:factor_relation} \\ 
     q_{-1} (\bm{\theta}) \propto q(\bm{\theta})^{1 - \frac{1}{N}} p_{0}(\bm{\theta})^{\frac{1}{N}}
  \end{align}

\subsection{Differential privacy}

Given privacy parameters $\epsilon \geq 0, \delta \geq 0$ randomized algorithm, $\mathcal{M}$, is said to be $(\epsilon, \delta)$-DP \citep{Dwork14} if for all possible sets of mechanism's outputs $S$ and for all neighboring datasets $\mathcal{D}, \mathcal{D'}$ differing in an only single entry ($d(\mathcal{D}, \mathcal{D'}) \leq 1$), the following inequality holds:
$$\mbox{Pr}[ \mathcal{M}(\mathcal{D}) \in S ] \leq e^{\epsilon} \cdot \mbox{Pr}[\mathcal{M}(\mathcal{D'}) \in S] + \delta.$$
The definition states that the amount of information revealed by a randomized algorithm about any individual's participation is limited. And the amount is determined by $\epsilon$ and $\delta$. 

A common way of constructing a differentially private algorithm is to add a calibrated amount of noise to an output of the algorithm. Suppose we want to privatize a function $h: \mathcal{D} \rightarrow \mathbb{R}^d $, which takes a dataset as an input and output a $d$-dimensional real-valued vector. The \textit{Gaussian mechanism} adds noise such that the output of the mechanism is given by  $\tilde h (\mathcal{D}) =  h (\mathcal{D}) + \vn$, where $\vn \sim \mathcal{N}(0, \sigma^2 \Delta_h^2\mathbf{I}_d)$. Here,  the noise scale depends on the \textit{global sensitivity}  \citep{dwork2006our} of the function $h$ denoted by $\Delta_{h}$, which is defined as the $L_{2}$-norm $\| h (\mathcal{D}) -  h (\mathcal{D'}) \|_{2}$ where $\mathcal{D}, \mathcal{D'}$ are neighboring datasets differing in an only single entry. The Gaussian mechanism is $(\epsilon, \delta)-$DP and $\sigma$ is a function that depends on $\epsilon, \delta$. For a single application of the mechanism,  $\sigma \geq \sqrt{2\log(1.25/ \delta)}/\epsilon$ for $\epsilon \geq 1$.  In our algorithm, we will use the Gaussian mechanism to privatize the approximate posterior distribution.

There are two important properties of differential privacy. The first one is  \textit{post-processing invariance} \citep{Dwork2006}, which states that the composition of any data-independent mapping with an $(\epsilon,\delta)$-DP algorithm is also $(\epsilon,\delta)$-DP. 
What this means in our context is that no analysis on our privatized approximate posterior can yield more information about the training data than what our choice of $\epsilon$ and $\delta$ allows.

The second property is 
\textit{composability} \cite{dwork2006our}, which states that the strength of privacy guarantee degrades in a measurable way with repeated use of the training data. 
In this work, we use the subsampled RDP composition \citep{wang2019subsampled} as a composition technique, as it provides tight bounds on the cumulative privacy loss when we subsample datapoints from the training data. For this, we use the auto-dp package (\url{https://github.com/yuxiangw/autodp}) to compute the privacy parameter $\sigma$ given our choice of $\epsilon, \delta$ values and the number of times we access data while running our algorithm.

\section{Our algorithm: DP-SEP}
\label{sec:DPSEP}

\vskip 0.1in
\begin{center}
\begin{tabular}{| c| c|}
 \hline
 $\mu$ & Mean parameter of a Gaussian distribution \\  
 \hline
 $\Sigma$ & Covariance matrix of a Gaussian distribution \\ 
 \hline
 $\eta$ & Natural parameter for the mean  \\
 \hline
 $\Lambda$ & Natural parameter for the covariance matrix  \\
 \hline
 $\sigma^{2}$ & Noise variance of the Gaussian distribution  \\
 \hline
 $\sigma_1^{2}$ & Noise variance of the Gaussian distribution for $\eta$  \\
 \hline
 $\sigma_2^{2}$ & Noise variance of the Gaussian distribution for $\Lambda$  \\
 \hline
 $\mathbb{E}_\E$ & Expectation with respect to  the variable $\E$  \\
 \hline
  $A^{-1}$ & Inverse of a matrix A\\ 
 \hline
 $A^{\top}$ & Transpose of a matrix A\\ 
 \hline
 $Tr[A]$ & Trace of a matrix A\\ 
 \hline
  $|A|$ & Determinant of a matrix A\\ 
 \hline
 $\lambda_{i}(A)$ & i-th eigenvalue of a matrix A \\ 
 \hline
 $\lambda_{min}(A)$ & Minimum eigenvalue of a matrix A \\ 
 \hline
 $\lambda_{max}(A)$ & Maximum eigenvalue of a matrix A \\ 
 \hline
\end{tabular}
\end{center}

\begin{algorithm*}[t]
   \centering
    \caption{DP-SEP}\label{algo:algorithm_DPSEP}
    \begin{algorithmic}[1]
    \REQUIRE Dataset $\mathcal{D}$. Initial natural parameters for approximating factor $\|\vtheta_f\|_2 \leq C$ and those for the prior $\|\vtheta_0\|_2 \leq C$, damping value $\gamma$, and the privacy parameter $\sigma$.
    \ENSURE $(\epsilon, \delta)$-DP natural parameters of the approximate posterior
    \FOR{$t=1, \dots, T$}
    \FOR{$n \in \{1, \dots, N\}, \mbox{uniformly random without replacement}$}
    \STATE Choose a datapoint  $\bm{x}_n \sim \mathcal{D}$ 
    \STATE Compute cavity distribution: $q_{-1} (\bm{\theta}) \propto q(\bm{\theta}) / f(\bm{\theta})$
    
    \STATE Compute tilted distribution: $\tilde p_{n} (\bm{\theta}) \propto q_{-1}(\bm{\theta})p(\bm{x}_{n} \lvert \bm{\theta})$
    
    \STATE Moment matching: $f_n(\bm{\theta}) \leftarrow \mbox{proj}[\tilde p_{n} (\bm{\theta})]/q_{-1}(\bm{\theta})$ 
    
    \STATE Clip the norm of the natural parameters: $\|\bm{\theta}_{f_{n}}\|_2 \leq C$
    
    \STATE Update the approximate posterior: $q^{\mbox{new}}(\bm{\theta}) \leftarrow f_n(\bm{\theta})^{\frac{\gamma}{N}}f(\bm{\theta})^{1 -\frac{\gamma}{N}} q_{-1}(\bm{\theta})$
    
    \STATE Add noise to natural parameters: $\tilde{\vtheta}_{\mbox{new}} = \bm{\theta}_{\mbox{new}} + \textbf{n}$ where $\textbf{n} \sim \mathcal{N}(0, \sigma^2 \Delta_ {\vtheta_{\mbox{new}}}^2 I)$
    
    \STATE Post-process natural parameters corresponding to covariance  to ensure positive definiteness
    
    \STATE Update the approximating factor: $f(\bm{\theta}) \propto  \left ( q^{\mbox{new}}( \tilde{\bm{\theta}}_{\mbox{new}})/ p_{0}(\bm{\theta}) \right )^{\frac{1}{N}}$. 
    
    \STATE Clip the norm of the natural parameters:
    $\|{\bm{\theta}}_{f} \|_2 \leq C$
    \ENDFOR
    \ENDFOR
    \end{algorithmic}
 \end{algorithm*}

In this section, we introduce our proposed algorithm, which we  call \textit{differentially private stochastic expectation propagation (DP-SEP)}. The algorithm outputs differentially private approximate posterior distributions by noising up the  natural parameters.

\subsection{Outline of DP-SEP}

\textbf{Initialization:} As shown in \algoref{algorithm_DPSEP}, we first initialize the approximating factor, $f(\bm{\theta})$, such that the norm of its natural parameters $\bm{\theta}_{f}$ and prior natural parameters $\bm{\theta}_{0}$ are bounded by a constant $C$ (i.e. $\| \bm{\theta}_{f} \|_2 \leq C$, $\| \bm{\theta}_{0} \|_2 \leq C$). 
 The norm clipping applied to each natural parameter becomes instrumental in computing the sensitivity of the natural parameters for the global approximate posterior $q(\bm{\theta})$, which is required in the later step in the algorithm. 
By construction, each local factor $f_n$  and the  approximating factor $f$ have its own natural parameters $\vtheta_{f_n}$ and $\vtheta_f$, respectively.
When the exponential distributions have bounded domain, the natural parameters are also norm bounded. 
However, when they are not norm bounded, e.g., the Gaussian distribution has an unbounded domain, 
%
%
we choose to clip the norm of $\vtheta_{f_n}$ and $\vtheta_f$ so that the natural parameters for the global approximate posterior have a limited sensitivity. 
%

\textbf{Step 1 -- 6:} 
The same as in the SEP algorithm in \algoref{algorithm_SEP}, 
at each run of the DP-SEP algorithm, we first subsample one datapoint uniformly without replacement from the dataset, $\bm{x}_{n} \in \mathcal{D}$, then compute the cavity distribution $q_{-1}(\bm{\theta})$, the tilted distribution $\tilde p_{n}(\bm{\theta})$ and the intermediate factor $f_{n} (\bm{\theta})$ for $\bm{x}_{n}$, followed by the \textit{moment matching} step.

\textbf{Step 7 -- 8:} 
Once $f_{n} (\bm{\theta})$ is computed, we need to ensure that its natural parameters, $ \bm{\theta}_{f_{n}}$, are also norm bounded by $C$ (i.e $\| \bm{\theta}_{f_{n}} \|_{2} \leq C$). 
Then, the algorithm updates the natural parameters of the approximate posterior  by making a partial update of the approximating factor and the cavity distribution according to the pre-selected damping value: $q^{\mbox{new}}(\bm{\theta}) \leftarrow f_n(\bm{\theta})^{\frac{\gamma}{N}}f(\bm{\theta})^{1 -\frac{\gamma}{N}} q_{-1}(\bm{\theta})$ .

As the approximating distribution is in the exponential family, we can express the approximate posterior natural parameters, $\bm{\theta}$, as a linear combination of the natural parameters of the approximating factor and the prior (i.e. $\bm{\theta} = N \bm{\theta}_{f} + \bm{\theta}_{0}$).
%
From this together with the damping value, we arrive at the following definition:
\begin{align}\label{eq:expon}
    \bm{\theta}_{\mbox{new}} = \frac{\gamma}{N}\bm{\theta}_{f_n} + \left(N - \frac{\gamma}{N} \right)\bm{\theta}_f + \bm{\theta}_0.
\end{align}

\textbf{Step 9 -- 11:} 
In the next step, we privatize the natural parameters $\bm{\theta}_{\mbox{new}}$ by adding the Gaussian noise with the sensitivity $\Delta_{\bm{\theta}_{\mbox{new}}} = \frac{2\gamma C}{N}$ (See \propref{sensitivity}). After adding Gaussian noise, 
the perturbed covariance natural parameter might not be positive definite. In such case, as a post-processing step, we project the negative eigenvalues
to small positive values to maintain positive definiteness of the natural parameters corresponding to the covariance matrix, following \citep{pmlr-v54-park17c}. This step does not change the level of privacy guarantee of the natural parameters after the projection, as differential privacy is post-processing invariant. Finally, in the last step, we update the unique approximating factor, $f(\bm{\theta})$, by  \eqref{factor_relation}, using the new privatized approximate posterior denoted by $q^{\mbox{new}}( \tilde{\bm{\theta}}_{\mbox{new}})$. The updated natural parameters of the approximating factor can be then easily computed by the following expression: $ \bm{\theta}_{f} = (\tilde\vtheta_{\mbox{new}} - \bm{\theta}_0) / N$. 
Once we update the natural parameters for the approximating factor, we ensure that its norm is also bounded by $C$. 

\subsection{Privacy analysis}

We use the subsampled Gaussian mechanism together with the analytic moments accountant for computing the total privacy loss incurred in our algorithm. Hence, we input a chosen privacy level $\epsilon, \delta$, the number of repetitions $T$, the number of datapoints $N$ and the clipping norm $C$ to the auto-dp package by \citep{wang2019subsampled}, which returns the corresponding privacy parameter $\sigma$. 
%

The following propositions state that (1) the sensitivity of the natural parameters is $\frac{2\gamma C}{N}$ and (2) the resulting algorithm is differentially private.

\begin{prop}\label{prop:sensitivity}
The sensitivity of the natural parameters, $\vtheta_{\mbox{new}}$, in \algoref{algorithm_DPSEP} is given by $\Delta_{\bm{\theta}_{\mbox{new}}} = \frac{2\gamma C}{N}$.
\end{prop}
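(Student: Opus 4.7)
The plan is to reduce the sensitivity calculation to a one-line triangle-inequality argument on the expression in \eqref{expon}. I would start from
\[
\bm{\theta}_{\mbox{new}} = \tfrac{\gamma}{N}\bm{\theta}_{f_n} + \bigl(N - \tfrac{\gamma}{N}\bigr)\bm{\theta}_f + \bm{\theta}_0,
\]
and consider two neighboring datasets $\mathcal{D}, \mathcal{D}'$ with $d(\mathcal{D},\mathcal{D}')\le 1$. Of the three summands, only the intermediate factor $\bm{\theta}_{f_n}$ is a function of the currently processed datapoint $\bm{x}_n$; the prior parameters $\bm{\theta}_0$ are data-independent by construction, and the approximating factor parameters $\bm{\theta}_f$ arose either from initialization or from an earlier privatized update (and so are fixed inputs to the current step, not functions of the currently swapped datapoint). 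Hence when switching from $\mathcal{D}$ to $\mathcal{D}'$, only $\bm{\theta}_{f_n}$ is replaced by some $\bm{\theta}_{f_n}'$, and
\[
\bm{\theta}_{\mbox{new}} - \bm{\theta}_{\mbox{new}}' = \tfrac{\gamma}{N}\bigl(\bm{\theta}_{f_n} - \bm{\theta}_{f_n}'\bigr).
\]

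From there, I would apply the triangle inequality and invoke the explicit clipping step (Step 7 of \algoref{algorithm_DPSEP}) which enforces $\|\bm{\theta}_{f_n}\|_2 \le C$ and $\|\bm{\theta}_{f_n}'\|_2 \le C$, giving
\[
\|\bm{\theta}_{\mbox{new}} - \bm{\theta}_{\mbox{new}}'\|_2 \;=\; \tfrac{\gamma}{N}\,\|\bm{\theta}_{f_n} - \bm{\theta}_{f_n}'\|_2 \;\le\; \tfrac{\gamma}{N}\bigl(\|\bm{\theta}_{f_n}\|_2 + \|\bm{\theta}_{f_n}'\|_2\bigr) \;\le\; \tfrac{2\gamma C}{N},
\]
and taking the supremum over neighboring pairs yields $\Delta_{\bm{\theta}_{\mbox{new}}} = \tfrac{2\gamma C}{N}$, which is exactly the claimed bound.

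The main conceptual point to justify carefully, rather than a calculational obstacle, is the claim that $\bm{\theta}_f$ contributes zero to the sensitivity at the current step. This is the only place where the argument could be misread: one might worry that $\bm{\theta}_f$ implicitly depends on the swapped datapoint through earlier iterations. I would address this by pointing out that sensitivity here is computed per iteration conditional on the state $(\bm{\theta}_f, \bm{\theta}_0)$, and that the iterative release across $T$ passes is accounted for separately through the subsampled RDP composition already invoked in the privacy analysis (using the \texttt{autodp} accountant). Since each previous release of $\bm{\theta}_f$ is itself a differentially private output, treating $\bm{\theta}_f$ as a fixed input to the current step and bounding only the contribution of the current $\bm{\theta}_{f_n}$ is precisely the right per-step sensitivity needed for composition; post-processing invariance then handles the downstream use of $\bm{\theta}_f$ in future iterations.
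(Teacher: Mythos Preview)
Your proposal is correct and follows essentially the same approach as the paper: both start from \eqref{expon}, cancel the shared $\bm{\theta}_f$ and $\bm{\theta}_0$ terms on neighboring datasets, and then apply the triangle inequality together with the clipping bound $\|\bm{\theta}_{f_n}\|_2\le C$ to obtain $\tfrac{2\gamma C}{N}$. Your additional paragraph justifying why $\bm{\theta}_f$ may be treated as fixed at the current step (via composition and post-processing) is more explicit than the paper, which simply assumes ``same initial values for $\bm{\theta}_f,\bm{\theta}_0$,'' but the underlying argument is identical.
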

\begin{proof}
Consider two neighboring databases, $\mathcal{D}, \mathcal{D'}$ differing by an entry $n$, and same initial values for $\bm{\theta}_f, \bm{\theta}_0$:
\begin{align}
\Delta_{\bm{\theta}_{\mbox{new}}} 
&= \max_{D,D'} \| \bm{\theta}_{\mbox{new}} - \bm{\theta'}_{\mbox{new}} \|_{2} 
\nonumber \\
&=  \max_{D,D'} ||  \left ( \tfrac{\gamma}{N}\bm{\theta}_{f_n} + \left(N - \tfrac{\gamma}{N} \right)\bm{\theta}_f + \bm{\theta}_0 \right ) \nonumber \\
& \quad - \left (  \tfrac{\gamma}{N}\bm{\theta'}_{f_n} + \left(N - \tfrac{\gamma}{N} \right) \bm{\theta}_f + \bm{\theta}_0 \right )  ||_{2}, \nonumber \mbox{by \eqref{expon}} \\
&= \tfrac{\gamma}{N} \max_{D,D'}\| \bm{\theta}_{f_n} - \bm{\theta'}_{f_n}  \|_{2} , \nonumber \\
&\leq \tfrac{2\gamma}{N} \max_{D,D'}  \| \bm{\theta}_{f_n} \|_{2}, \mbox{ due to the triangle inequality}\nonumber \\
&= \tfrac{2C\gamma}{N},  \mbox{ due to the norm clipping on  natural parameters}. \nonumber
\end{align} 
\end{proof}
%

\begin{prop}\label{prop:sep_dp}
The DP-SEP algorithm produces ($\epsilon, \delta$)-DP approximate posterior distributions. 
\end{prop}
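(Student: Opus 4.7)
The plan is to establish that DP-SEP is $(\epsilon,\delta)$-DP by showing (i) each inner iteration is an instance of the subsampled Gaussian mechanism applied to a function with bounded sensitivity, and (ii) the total privacy loss over all iterations is accounted for by the subsampled RDP composition, which is exactly the tool used to calibrate $\sigma$ from the inputs $(\epsilon,\delta,T,N,C)$.

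First I would identify the only data-touching quantity inside the inner loop. Steps 1--8 of \algoref{algorithm_DPSEP} compute $\vtheta_{\mbox{new}}$ from the sampled datapoint $\bm{x}_n$ together with the current factor $\vtheta_f$ and prior $\vtheta_0$ (both of which are either initial values or outputs of previous privatized iterations). Thus the data enters each iteration only through $\vtheta_{f_n}$, and the quantity released in Step 9 is $\tilde\vtheta_{\mbox{new}} = \vtheta_{\mbox{new}} + \vn$ with $\vn \sim \mathcal{N}(0,\sigma^2 \Delta_{\vtheta_{\mbox{new}}}^2 \mathbf{I})$. By \propref{sensitivity}, $\Delta_{\vtheta_{\mbox{new}}} = \tfrac{2\gamma C}{N}$, so Step 9 is an application of the Gaussian mechanism on a function of the training data with sensitivity $\tfrac{2\gamma C}{N}$. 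The positive-definiteness projection in Step 10, the factor update in Step 11, and the clipping in Step 12 are all deterministic functions of $\tilde\vtheta_{\mbox{new}}$ (plus the public quantities $\vtheta_0$ and $N$), so by post-processing invariance they incur no additional privacy cost.

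Next I would account for the subsampling in Step 2. Because the datapoint $\bm{x}_n$ is drawn uniformly without replacement, each inner step is a subsampled Gaussian mechanism with sampling probability $1/N$ per draw, which enjoys the privacy amplification captured by the subsampled RDP analysis of \citet{wang2019subsampled}. Over $T$ outer epochs, each consisting of $N$ such subsampled Gaussian releases, the adaptive composition follows by the RDP composition theorem; converting back to $(\epsilon,\delta)$-DP gives the final guarantee. Because we feed exactly $(\epsilon,\delta,T,N,C)$ into the auto-dp package to pick $\sigma$, the returned noise scale is by construction the one for which the composed mechanism satisfies $(\epsilon,\delta)$-DP.

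Finally, I would conclude by noting that the output of DP-SEP (the approximate posterior, parameterized by $\vtheta_f$ and through it by $q(\vtheta)$) is a deterministic post-processing of the sequence of privatized $\tilde\vtheta_{\mbox{new}}$ released across iterations, so by post-processing invariance the approximate posterior itself is $(\epsilon,\delta)$-DP. The main obstacle I anticipate is arguing adaptive composition cleanly: the cavity distribution, tilted distribution, and $\vtheta_f$ used at iteration $t$ depend on previously released private outputs, so one must invoke adaptive (rather than non-adaptive) composition for subsampled RDP. This is standard for the subsampled Gaussian mechanism but should be stated explicitly, since otherwise it is tempting to overlook the adaptivity of $\vtheta_f$ across iterations.
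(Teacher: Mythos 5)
Your proposal follows essentially the same route as the paper's proof: each release of $\tilde\vtheta_{\mbox{new}}$ is a subsampled Gaussian mechanism whose noise is calibrated to the sensitivity from \propref{sensitivity}, the total loss is composed via the subsampled RDP accountant of \citet{wang2019subsampled}, and post-processing invariance covers the remaining steps. Your version is in fact more explicit than the paper's two-sentence argument --- in particular about the positive-definiteness projection being post-processing and about the need for \emph{adaptive} composition since $\vtheta_f$ depends on previously released outputs --- but the underlying argument is the same.
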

\begin{proof}
Due to the Gaussian mechanism, the natural parameters after each perturbation are DP. By composing these with the subsampled RDP composition \citep{wang2019subsampled}, the final natural parameters are $(\epsilon, \delta)$-DP, where the exact relationship between  $(\epsilon,  \delta)$, $T$ (how many repetitions SEP runs), $N$ (how many datapoints a dataset has), and $\sigma$ (the privacy parameter) follows the analysis of \citep{wang2019subsampled}.
\end{proof}

\subsection{A few thoughts on the algorithm}

\paragraph{Is this DP-SGD applied to DP-SEP?}
Clipping then adding Gaussian noise seems to indicate that this algorithm is simply DP-SGD. However, there are two subtle but important differences between DP-SEP and DP-SGD. 
First, DP-SGD computes the gradients on the datapoints in a batch then clip the gradients. However, DP-SEP computes the global posterior distribution, which is a concatenation of all the factors associated with all the data points in the training data, not just in a selected mini-batch of the data. Hence, the sensitivity is on the order of $1/N$ where $N$ is the number of datapoints in the training data, while the sensitivity of DP-SGD is on the order of $1/B$ where $B$ is the size of the mini-batch. Hence, DP-SEP can significantly reduce the amount of noise in each privatization step, compared to DP-SGD, as typically $N \gg B$.

Second, in DP-SGD, the reason clipping the gradients is because we simply do not know how much one datapoint's difference in the neighbouring datasets would change the gradient values (that are computed on a selected mini-batch). 
On the other hand, in our case we do know the amount of change in the natural parameters of the global approximate posterior, as the natural parameters of the global approximate posterior are the sum of those of each factor associated with the datapoints and that of the prior. 
However, for the distribution with an unbounded domain, the natural parameters also have an unbounded domain. This is the reason we clip the norm of the natural parameters for the approximating factor and the local factor.

\paragraph{Choice of clipping norm.}
In our algorithm, we treat the clipping norm threshold $C$ as a hyperparameter, as in many other cases of DP algorithms (e.g., \citep{2016arXiv160700133A}). When setting $C$ to a small value, the sensitivity gets also smaller which is good in terms of the amount of noise to be added. However, the small clipping norm can drastically discard information encoded in the natural parameters after the clipping procedure. 
Setting $C$ to a large value results in a high noise variance. However, most of the information in the natural parameters will be kept after clipping. 
Hence, finding the right value for the clipping norm is essential to keep the signal-to-noise ratio high. The optimal value for the clipping norm depends on the distribution of the target variables that we apply the clipping procedure. Hence, no one solution fits all. 

While privacy analysis for hyperparameter tuning is an active research area \citep{2016arXiv160700133A, DPVI_Nonconjugate16, adaptative_clip, kurakin2022_imagedata}, in this paper, we assume selecting the clipping norm does not incur privacy loss, while incorporating this aspect is an interesting research question for future work.


\paragraph{Clipping as a form of regularizion}

An interesting finding when applying clipping in SEP is that the clipping procedure to the natural parameters itself improves the performance of SEP, as shown in our experiments  \secref{experiment}.
It is widely known that EP frequently encounters numerical instabilities, e.g., due to the accumulation of numerical errors in local updates over the course of training, which hinders the algorithm to converge properly. SEP, on the other hand, seems to be superior to EP in terms of numerical stability, as it updates one factor that represents the average contribution of all factors to the posterior in every training step and thus numerical errors seem to accumulate slower than those in EP.
From our experience, using clipping to SEP with a well-chosen clipping threshold further improves its stability, resulting in better convergence.  
We conjecture this is because applying the clipping procedure to the natural parameters helps avoiding any undesirable jumps in the search space, and thus effectively reduces the search space on which the algorithm focuses.

\section{Quantitatively analysis: effect of noise}\label{sec:Dkl}

Here, we would like to analyze the effect of noise added to SEP. In particular, we are interested in analyzing the distance between the posterior distributions, where one is the posterior distribution obtained by SEP (i.e., non-DP) and the other is the posterior distribution obtained by DP-SEP.  As a distance metric, we use the KL divergence between them.  \thmref{Dkl} formally states the effect of noise for privacy on the accuracy of the posterior.
For simplicity, we assume the posterior distribution is $d$-dimensional multivariate Gaussian. We also assume the posterior distributions between SEP and DP-SEP are compared at $T=1, n=1$.  
We denote the posterior distribution of DP-SEP (\algoref{algorithm_DPSEP}) by $p := \Nrm(\vmu_p, \Sigma_p) $ and the posterior distribution of SEP (\algoref{algorithm_SEP}) by $q := \Nrm(\vmu_q, \Sigma_q)$.

Before introducing the theorem, we first define the two quantities we release in every DP-SEP step. 
\begin{defn}
\label{defn:eta_dpsep}
We express the first natural parameters obtained by DP-SEP  as:
$$\veta_p = \veta_q + \ve$$
where $\eta_q = \Sigma_q^{-1}\vmu_q$ is the first natural parameters obtained by SEP and  $\ve$ is \textit{iid} drawn from $\Nrm(0, \sigma_1^2)$.
\end{defn}

\begin{defn}
\label{defn:lambda_dpsep}
By following Step 10 in \algoref{algorithm_DPSEP}, we express the second natural parameters obtained by DP-SEP as:
\begin{align}
    \Lambda_p = \Lambda_q + \E + A
\end{align}
where $\Lambda_q = \Sigma_q^{-1}$ is the second natural parameters by SEP, $\E$ is a symmetric matrix where the upper triangular entries are iid drawn from $\Nrm(0, \sigma_2^2)$ and the lower triangular are copied from the upper half. We construct a diagonal matrix $A$:
\begin{align}
    A=\left\{
    \begin{array}{ll}
        (-\lambda_{min}(\Lambda_q + E) + \rho)I, & \mbox{when } \sigma_2 \neq 0 \\
        0, & \mbox{otherwise}
    \end{array}
\right.
\end{align}  to ensure the resulting $\Lambda_p$ to be positive definite by taking the minimum eigenvalue of $\Lambda_q+\E$ and adding a small positive constant $\rho$ such that $\lambda_{min}(\Lambda_q + \E + A) = \rho$.
Note that
when there is no noise added to the covariance, we set $A = 0$, as the role of $A$ is to keep the posterior covariance to be positive definite and there is no need to add this term any longer since 
$ \Lambda_q$ is already positive definite. By definition of $A$, setting $A=0$ makes $\rho=\lambda_{\min}(\Lambda_q)$.

\end{defn}

\begin{thm}[Privacy-accuracy trade-off given posteriors by \algoref{algorithm_SEP} and \algoref{algorithm_DPSEP}]
\label{thm:Dkl}
Following \defnref{eta_dpsep} and \defnref{lambda_dpsep}, with probability at least $1 - d \exp{ \left( \frac{-M^2}{2v(\E)}\right)}$ for any $M>0$, the expected  KL divergence between SEP and DP-SEP posterior distributions is bounded by:
\begin{align}\label{eq:Dkl}
&\mathbb{E}_\ve \mathbb{E}_\E(D_{{kl}}[q || p])  \nonumber \\
& \leq \frac{1}{2} \biggr[ Tr((-\lambda_{min} (\Lambda_q) + \sqrt{2v(\E)\log(d)} + \rho) \Lambda_{q}^{-1}) \nonumber \\
& + 
\sigma_1^{2} \lambda_{max}(\Sigma_{q})  \sum_{i=1}^{d}  \left( \dfrac{ h + H - 1 + \frac{\lambda_{min}(\Lambda_q)  + \sqrt{2v(\E)\log(d)} - \rho}{\lambda_{i}(\Lambda_q)} }{h H}\right) \\ \nonumber
& + \sum_{i=1}^{d} b_{i}^{2} \left( \dfrac{ h + H - 1 + \frac{\lambda_{min}(\Lambda_q)  + \sqrt{2v(\E)\log(d)} - \rho}{\lambda_{i}(\Lambda_q)} }{h H}\right) - \eta_{q}^{\top} \Lambda_{q}^{-1}\eta_q  \nonumber \\
& + \va \va^{\top} \left( - \lambda_{min}(\Lambda_q) +  \sqrt{2v(\E)\log(d)} +  \rho\right) +  \sum_{i=1}^d \log  \left( \dfrac{ h + H - 1 + \frac{\lambda_{min}(\Lambda_q)  + \sqrt{2v(\E)\log(d)} - \rho}{\lambda_{i}(\Lambda_q)}  }{h H } \right) 
\biggr] \nonumber 
\end{align}
where $b_i := (\Lambda_{q}^{\frac{-1}{2}}\eta_{q})_i$, $\va:= \Lambda_{q}^{-1} \eta_q$, $h = 1 + \frac{-2M - \lambda_{min}(\Lambda_q)+ \rho}{\lambda_{i}(\Lambda_q)} $ and $H = 1 + \frac{2M - \lambda_{min}(\Lambda_q)  +\rho }{\lambda_{i}(\Lambda_q)}$.
%
The  matrix variance statistic is denoted by 
$v(E) = \|\sum_k \sigma_2^2 B_k B_k^T \|$, where the norm is spectral norm and 
    $\E := \sum_{k=1}^{\frac{d(d+1)}{2}} \sigma_2 \gamma_k  B_k$ 
where $\gamma_k$ is a standard normal Gaussian random variable and $B_k$ is a finite sequence of fixed Hermitian (symmetric) matrices.  $\lambda_{max}(\Lambda_{q})$ denotes the largest eigenvalue of $\Lambda_{q}$.
\end{thm}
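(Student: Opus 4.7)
The plan is to start from the standard closed-form KL divergence between two non-degenerate $d$-dimensional Gaussians expressed in natural-parameter form,
\begin{align*}
D_{{kl}}[q\,\|\,p] = \tfrac12\bigl[\Tr(\Lambda_p \Sigma_q) - d + (\vmu_p-\vmu_q)^\top \Lambda_p (\vmu_p-\vmu_q) + \log(|\Sigma_p|/|\Sigma_q|)\bigr],
\end{align*}
and substitute the DP-SEP parameterisation $\veta_p=\veta_q+\ve$, $\Lambda_p = \Lambda_q + \E + A$, together with $\vmu_q = \Lambda_q^{-1}\veta_q$ and $\vmu_p = \Lambda_p^{-1}\veta_p$. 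The four pieces that survive the iterated expectation correspond one-to-one to the four summands in the theorem: the trace reduces to $d + \Tr((\E+A)\Sigma_q)$ and contributes the first $\Tr$-term; the log-determinant piece becomes $-\sum_i\log(\lambda_i(\Lambda_p)/\lambda_i(\Lambda_q))$ and contributes the last sum; and expanding the Mahalanobis term via $\vmu_p-\vmu_q = \Lambda_p^{-1}(\veta_q+\ve) - \Lambda_q^{-1}\veta_q$ produces, after collecting, the $\sigma_1^2$-term together with the $\va^\top \va(\cdots)$ and $-\veta_q^\top\Lambda_q^{-1}\veta_q$ terms, where $\va:=\Lambda_q^{-1}\veta_q$.

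Next I would take $\mathbb{E}_{\ve}$ first, conditionally on $\E$. Because $\ve$ is zero-mean Gaussian with covariance $\sigma_1^2 I$, cross terms in $\ve$ vanish and $\mathbb{E}_\ve\,\ve^\top M \ve = \sigma_1^2\,\Tr(M)$ for any symmetric $M$ independent of $\ve$. In particular $\mathbb{E}_\ve\,\veta_p^\top \Lambda_p^{-1}\veta_p = \veta_q^\top\Lambda_p^{-1}\veta_q + \sigma_1^2\,\Tr(\Lambda_p^{-1})$, while the cross term $-2\vmu_q^\top\veta_p$ collapses to $-2\veta_q^\top\Lambda_q^{-1}\veta_q$. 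Writing $\vmu_q^\top\Lambda_p\vmu_q = \va^\top(\Lambda_q+\E+A)\va$ isolates the piece linear in $\E$ (whose $\E$-mean is zero) and the piece $c(\E)\|\va\|^2$ coming from $A = c(\E)I$ with $c(\E):=-\lambda_{min}(\Lambda_q+\E)+\rho$.

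The remaining expectation over $\E$ is handled by matrix-Gaussian concentration. Writing $\E = \sum_k \sigma_2 \gamma_k B_k$ as a Gaussian series of Hermitian matrices, the matrix-Gaussian tail bound yields $\Pr[\|\E\|\ge M]\le d\exp(-M^2/(2v(\E)))$, supplying exactly the failure probability stated in the theorem, together with the companion in-expectation bound $\mathbb{E}\|\E\|\le\sqrt{2v(\E)\log d}$. On the event $\{\|\E\|\le M\}$, Weyl's inequality gives $|\lambda_i(\Lambda_q+\E)-\lambda_i(\Lambda_q)|\le M$, so $c(\E)\in[-\lambda_{min}(\Lambda_q)-M+\rho,\,-\lambda_{min}(\Lambda_q)+M+\rho]$ and consequently every eigenvalue of $\Lambda_p$ lies in $[h\,\lambda_i(\Lambda_q),\,H\,\lambda_i(\Lambda_q)]$ with $h,H$ exactly as in the statement. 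I would then bound $\Tr(\Lambda_p^{-1})$, $\Tr(\Lambda_p^{-1}\Sigma_q)\le\lambda_{max}(\Sigma_q)\,\Tr(\Lambda_p^{-1})$, and $\log(|\Lambda_q|/|\Lambda_p|)$ eigenvalue-by-eigenvalue; the identity $\tfrac1h+\tfrac1H-\tfrac1{hH}=\tfrac{h+H-1}{hH}$, together with insertion of the expected-norm offset $\sqrt{2v(\E)\log d}$ into the inner scalar, produces the peculiar aggregated quotient that appears throughout \eqref{Dkl}.

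The main obstacle I anticipate is the joint handling of $\E$ and $A$: because $A$ is a nonlinear function of $\E$ through $\lambda_{min}(\Lambda_q+\E)$, the trace $\Tr((\E+A)\Sigma_q)$ does not simplify by $\mathbb{E}\E=0$ alone, and $\Lambda_p^{-1}$ has no clean closed-form expectation. The clean route is to condition on $\{\|\E\|\le M\}$, replace $\lambda_{min}(\Lambda_q+\E)$ by the worst-case value $\lambda_{min}(\Lambda_q)\pm\sqrt{2v(\E)\log d}$ arising from the expected spectral-norm bound, and absorb the complement into the failure probability. Combining the four bounded pieces, multiplying by $\tfrac12$, and cancelling the $-d$ against the leading $\Tr(\Lambda_q\Sigma_q)=d$ then yields \eqref{Dkl}.
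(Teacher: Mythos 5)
Your proposal is correct and follows essentially the same route as the paper: the same natural-parameter KL decomposition into trace, Mahalanobis, and log-determinant pieces, taking $\mathbb{E}_\ve$ first so that cross terms vanish and $\sigma_1^2\,\mathrm{Tr}(\Lambda_p^{-1})$ appears, then handling $\E$ and the data-dependent shift $A$ via the matrix-Gaussian series tail bound, $\mathbb{E}[\lambda_{max}(\E)]\le\sqrt{2v(\E)\log d}$, and Weyl's inequality to trap the eigenvalues of $\Lambda_p$ in $[h\lambda_i(\Lambda_q),H\lambda_i(\Lambda_q)]$. The only step you leave implicit is the secant (convexity) bound $\mathbb{E}[1/X]\le\frac{H+h-\mathbb{E}[X]}{Hh}$ for $h\le X\le H$, which the paper invokes explicitly with $X=1+\lambda_i(P(\E))$ and a lower bound on $\mathbb{E}[\lambda_i(P(\E))]$ to produce the quotient $\frac{h+H-1+(\lambda_{min}(\Lambda_q)+\sqrt{2v(\E)\log d}-\rho)/\lambda_i(\Lambda_q)}{hH}$; your algebraic identity $\frac1h+\frac1H-\frac1{hH}=\frac{h+H-1}{hH}$ is exactly the $\mathbb{E}[X]=1$ specialization of that lemma, so the mechanism you describe is the right one.
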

See \suppsecref{proof_dkl} for detailed proof.
 The rough proof sketch is as follows. We first consider  the closed-form KL divergence between the private and non-private posterior distributions with respect to the two Gaussian noise distributions. 
Computing the expectation with respect to $\Nrm(0, \sigma_1^2)$ is straightforward. However, computing the expectation with respect to $\Nrm(0, \sigma_2^2)$ is more involved and we need to take into account the minimum and maximum eigenvalues of the second  natural parameters (corresponding to the inverse covariance matrix) to find the desired upper bound. To do this, we use the concentration bound for Gaussian random matrices.  

A few things are worth noting. Recall that each noise variance $\sigma_1 = \sigma\frac{2C}{N}$ and $\sigma_2 = \sigma\frac{2C}{N}$ (for a damping rate 1) contains the sensitivity of the natural parameters as well as the privacy parameter $\sigma$ which is a function of $\epsilon, \delta$.
Also, notice $\sigma_2$ appears in $v(\E)$ in the upper bound. 
This indicates that the divergence between the private posterior and non-private posterior distributions scales with $1/N$  with fixed $\epsilon, \delta$.

In the limit of infinite amounts of data,  $\lim_{N \to \infty} \sigma_1 = 0$ and $\lim_{N \to \infty} \sigma_2 = 0$. 
 At this limit, the eigenvalues of the noise matrix also $\lim_{\sigma_2 \to 0}\lambda_i(\E) =0$ for all $i$. And, for any $M\geq0$, the probability  $P[\lambda_{max}(\E)\leq M] \leq 1- d \exp(-\tfrac{M^2}{2v(\E)}) \to 1$, as $\lim_{\sigma_2 \to 0}v(\E) =0$.
  In this case, the gap between the private and non-private posterior distributions becomes closed. See \suppsecref{bound_no_noise} for the detailed discussion of the bound when $N \mapsto \infty$.

\thmref{Dkl} does not take explicitly into account the clipped threshold for the natural parameters. Although, it can be easily taken into account by  considering scaling down  the $q$ natural parameters by $1 / \max(1, \| \cdot \|_{F} / C)$ inside the $p$ natural parameters definition. These clipping thresholds are not functions of the Gaussian noise addition, and thus, do not affect computing the expectations in the upper bound. 
For curious readers, we also provide the KL divergence between the posterior distribution by the clipped version of SEP and that by SEP in \suppsecref{kl_sep_clipsep}.

For $T>1$ or $n>1$, the analysis we made here does not hold, due to the nested structure of SEP updates. However, 
our analysis can be interpreted as the discrepancy between these two (SEP and DP-SEP) algorithms at a single updating step given the same values given from the previous step.

\begin{figure}[t]
    \centering
    \includegraphics[trim={0.5cm 0.5cm 0 0}, width=0.46\textwidth]{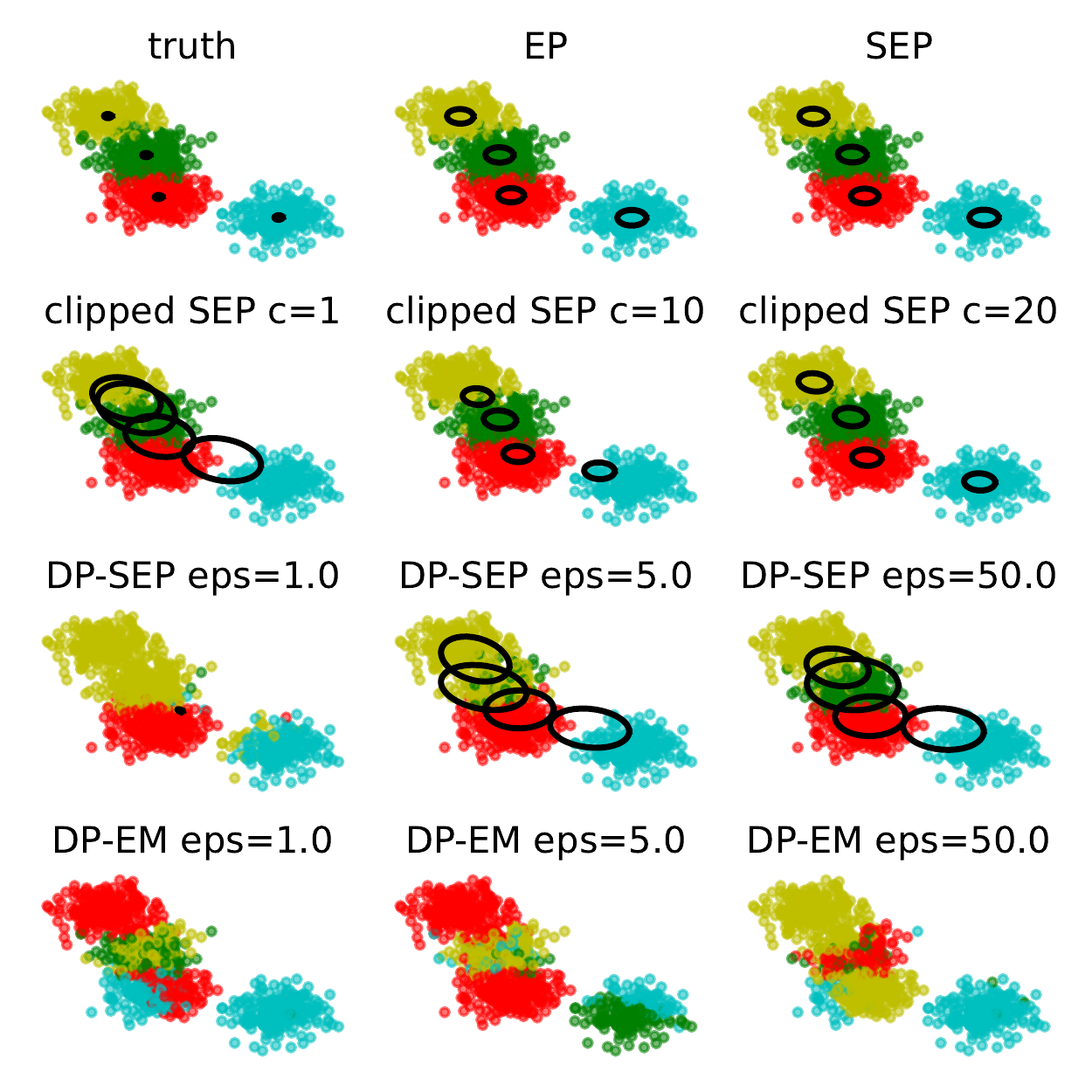}
    \caption{ Posterior approximation for the mean of the Gaussian components. Black rings indicate 98 \% confidence level. 
    The coloured dots indicate the true
label (top left) or the inferred cluster assignments (the rest). 
    The top row shows EP (middle) and SEP (right). The second row shows the effect of clipping on the posterior estimate as a function of the clipping threshold $C$.
    The third row shows the labels for DP-SEP with $\delta=10^{-5}$ and at $\epsilon=1, 5, 50$. 
    The bottom row shows the the labels for DP-EM at the same levels of privacy. Black rings for DP-EM are not shown as they appear outside the range of the plot.}
    \label{cluster_means}
\end{figure} 

\begin{table*}[ht!]
\caption{Accuracy of the posterior distribution (Mixture-of-Gaussian with Synthetic data)}
\label{tab:summary}
\centering
\scalebox{1.0}{
\begin{tabular}{l l l l }
\toprule
 Method & F-norm on mean  & F-norm on covariance & average F-norm \\ 
 \midrule
 SEP ($\epsilon=\infty)$ & 0.0020 &  0.0004 & 0.0012 \\ 
   \hline
  SEP-clipped C=20 & 0.0263 & 0.0004 & 0.0134 \\ 
 \hline
    SEP-clipped C=10  & 1.4950 & 0.0005 & 0.7477 \\ 
 \hline
    SEP-clipped C=1 & 2.2065 & 0.0459 & 1.1262 \\ 
 \hline
 DP-EM ($\epsilon=50)$ &  5.3769 & 1.3189 & 3.3479  \\ 
 \hline
    DP-EM ($\epsilon=5)$ &  10.4548 & 3.7573  &  7.1060 \\ 
 \hline
    DP-EM ($\epsilon=1)$ & 7.5166 & 57.8665 &  32.6916 \\ 
 \hline
 DP-SEP ($\epsilon=50)$ & 2.2411 & 0.0358 &  1.1385\\
 \hline
 DP-SEP ($\epsilon=5)$  & 12.1623  & 1.0655  & 6.6139 \\
\hline
  DP-SEP ($\epsilon=1)$  &   82.9746 & 5.0777  & 44.0262 \\
 \hline
\bottomrule\\
\end{tabular}

}
  \vspace{-0.5cm}
\end{table*}

\section{Related Work}\label{sec:related}

To the best of our knowledge, no prior work on differentially private expectation propagation or stochastic expectation propagation exits in the literature.

Remotely related work would be differentially private versions of  Bayesian inference methods. This line of research started from \citep{dimitrakakis2014robust}, which showed Bayesian posterior sampling becomes differentially private with a mild  condition on the log likelihood. Then many other  differentially private Bayesian inference methods appeared in the literature, which include posterior sampling (e.g., \citep{Wangetal15, FGWC16, zhang2016differential, pmlr-v89-li19a}), variational inference  \citep{DBLP:journals/jair/ParkFCW20, DPVI_Nonconjugate16}, and inference for generalized linear models \citep{pmlr-v139-kulkarni21a}. In this paper, we compare the performance of our method to that of differentialy private VI \citep{DPVI_Nonconjugate16} and differentially private expectation maximization \citep{pmlr-v54-park17c}.

\section{Experiments}\label{sec:experiment}

The purpose of this section is to evaluate the performance of DP-SEP on different tasks and datasets. First, we consider a Mixture of Gaussians for clustering problem on a synthetic dataset and test DP-SEP at different levels of privacy guarantees. 

In the second experiment, we consider a Bayesian neural network model for regression tasks and quantitatively compare our algorithm with other existing non-private methods for Bayesian inference. Our code is available at: \url{https://github.com/mvinaroz/DP-SEP}

\subsection{Mixture of Gaussians for clustering}

In this section, we consider a Mixture of Gaussian model for clustering using synthetic data. 
Following \citep{sep}, we generate a synthetic dataset containing $N=1000$ datapoints drawn from $J=4$ Gaussians of $4$-dimensional inputs, where each mean is sampled from a Gaussian distribution $p(\vmu_{j})= \mathcal{N} (\vmu ; \vm, I)$, each mixture component is isotropic $p(\vx \lvert \vh_n) = \mathcal{N} (\vx ; \vmu_{\vh_n}, 0.5^{2}I)$ and the cluster identity variables are sampled from a categorial uniform distribution $p (\vh_n=j) = \frac{1}{4}$. 

We test EP, SEP, SEP with different clipping norms (clipped SEP), and DP-SEP to approximate the joint posterior\footnote{Following \citep{sep}, we also assume the rest of the parameters to be known.} over the cluster means $\vmu_{j}$ and the cluster identity variables $\vh_n$. 
We also test DP-EM  \citep{pmlr-v54-park17c} that adds Gaussian noise to the expected sufficient statistics to ensure the parameters of the mixture of Gaussians model to be  differentially private.


Figure~\ref{cluster_means} visualizes the posterior means (two input dimensions are chosen for visualization) by each of these methods after $100$ iterations. For DP-SEP we set the clipping norm to $C=1$. For SEP and DP-SEP, we fixed the damping value, $\gamma=1$, i.e., $\gamma / N = 1/1000$. The figure shows that for a restrictive privacy regime $\epsilon=1 $, the clusters obtained by DP-SEP are not well separated. However, as we increase the privacy loss, the performance of DP-SEP gets closer to the non-private ones (SEP and EP) and the ground truth. The posterior from DP-SEP exhibits a higher uncertainty than the other non-private methods due to the clipping threshold and the added noise to the mean and covariance during training.

In \tabref{summary}, we also provide a quantitative analysis of the results above in terms of F-norm of the difference between the ground truth parameters (Gaussian parameters fitted by  No-U-Turn Sampler (NUTS) \citep{Hoffman2014TheNS}) and the estimated parameters by each method. 
F-norm is first used to evaluate the accuracy of the learned posterior in \citep{sep}. It is the L2 distance between the parameters of the ground truth posterior and those of the learned posterior, when the parameters are flattened into a single long vector. The values reported in \tabref{summary} are averages across five independent runs of each method. 


As one could expect, as the dataset size is relatively small $N=1000$ but the number of posterior parameters is relatively large, the privacy-accuracy trade-off measured in terms of F-norm is poor. However, in the next experiment with large datasets, this is not the case. 

\begin{table}[!ht]
\caption{Regression datasets. Size, number of numerical features.}
\label{tab:data_description}
\centering
\scalebox{1.0}{
\begin{tabular}{l r c}
\toprule

Dataset & $\#$ samps  & $\#$ features
\\
\\
\midrule
Naval  & 11934  & 16 \\
Kin8nm & 8192  &  8 \\
Power  & 9568 & 4 \\
Wine & 1599  &  11 \\
Protein  & 45730 & 9 \\
Year  & 515345  & 90 \\
\bottomrule\\
\end{tabular}

}
  \vspace{-0.5cm}
\end{table}

\subsection{Bayesian neural networks}\label{sec:BNNs}


We explore the performance of DP-SEP on neural network models to handle more complex real-world datasets for regression problems.
The datasets used in the experiments are publicly available at the UCI machine learning repository\footnote{\url{https://archive.ics.uci.edu/ml/index.php}} and a brief description can be found in \tabref{data_description}.
%

 We consider a fully-connected neural network model with 1 hidden layer, which consists of input layer that maps inputs to the hidden units and a hidden layer that maps hidden units’ output to the output of the network. 
 %
 Under this neural network model, a mini-batch of the data $X_s \in \mathbb{R}^{s \times d}$ propagates through the network by first going through the input layer, then the hidden layer sequentially given by 
 \begin{align}
     \mbox{input layer's output : }&  Z_0 = \sigma(X_s W_0\trp), \\
     \mbox{network's output : }&  \vz_1 = Z_0\trp \vw_1\trp,
 \end{align} where $\sigma$ is a element-wise non-linearity such as sigmoid or rectifying linear unit (ReLU) and the weight matrix of the input layer is $W_0$ and and weight vector of the hidden layer is $\vw_1$. Note that the size of $Z_0$ is the number of hidden units $d_h$ by the input dimension\footnote{For simplicity in notation, we do not include the bias term. However, in our code, we use the bias term in each layer.} $d$. The size of the network's output is the mini-batch size $s$, as  the output of the network given a datapoint is $1$-dimensional in the regression problems we consider here.
We set the number of hidden units to $100$ for \textit{Year} and \textit{Protein} datasets and to $50$ for the other four UCI datasets we used. 
The likelihood of the mini-batch of the data under the neural network model is given by 
\begin{align}
    p(\vy|W_0, \vw_1, X_s, \gamma) = \prod_{i=1}^s \Nrm(y_i|z_{1,i},\gamma^{-1}) 
\end{align} where $\gamma$ is the noise precision and $z_{1,i}= \vw_1\trp\sigma(W_0\vx_i)$. 

\begin{table*}[!ht]
\caption{RMSE on test data at $\epsilon=1$ and $\delta=1e^{-5}$ (UCI datasets)}
\label{tab:test_rmse}
\vskip 0.1in
\centering
\scalebox{0.85}{
\begin{tabular}{l c c c c c c}
\toprule
\multirow{5}{*}{Dataset} & 
\multicolumn{5}{c}{Average Test RMSE and Standard deviation }  \\
\midrule
 &  VI & EP & SEP & SEP clipped & DP-SEP & DP-VI \\
\\
\midrule
Naval & 0.005 $\pm$ 0.0005  & 0.003 $\pm$ 0.0002 & \textbf{0.002 $\pm$ 0.0001} & 0.002 $\pm$ 0.0002 & 0.002 $\pm$ 0.0003 & 0.010  $\pm$ 0.0016 \\
Kin8nm  & 0.099 $\pm$ 0.0009 & 0.088 $\pm$ 0.0044& 0.089 $\pm$ 0.0042 &  0.078 $\pm$ 0.0033 &\textbf{0.078 $\pm$ 0.0022} & 0.098 $\pm$ 0.0085 \\
Power  & 4.327 $\pm$ 0.0352 & 4.098 $\pm$ 0.1388 &  4.061 $\pm$ 0.1356 & \textbf{4.013 $\pm$ 0.1246} & 4.032 $\pm$ 0.1385 & 4.350 $\pm$ 0.1274\\
Wine  & 0.646 $\pm$ 0.0081 & \textbf{0.614 $\pm$ 0.0382} & 0.623 $\pm$ 0.0436 & 0.627 $\pm$ 0.0411 & 0.627 $\pm$ 0.0362  & 0.734 $\pm$ 0.0510\\
Protein  & 4.842 $\pm$ 0.0305 & 4.654 $\pm$ 0.0572 & 4.602 $\pm$ 0.0649 & 4.581 $\pm$ 0.0599 &4.585 $\pm$ 0.0589 & 4.934 $\pm$ 0.0532 \\
Year  & 9.034 $\pm$ NA & 8.865$\pm$ NA & 8.873 $\pm$ NA  & \textbf{8.862 $\pm$ NA} &\textbf{8.862 $\pm$ NA} & 9.971 $\pm$ NA\\
\bottomrule\\
\end{tabular}

}
\end{table*} 
\begin{table*}[!ht]
\caption{Test log-likelihood $\epsilon=1$ and $\delta=1e^{-5}$ (UCI datasets)}
\label{tab:test_ll}
\centering
\scalebox{0.85}{
\begin{tabular}{l c c c c c c}
\toprule
\multirow{5}{*}{Dataset} &  \multicolumn{5}{c}{Avgerage Test Log-likelihood and Standard deviation}  \\
\midrule
&  VI & EP & SEP & SEP clipped & DP-SEP & DP-VI \\
\\
\midrule
Naval & 3.734 $\pm$ 0.116 & 4.164 $\pm$ 0.0556 & 4.609 $\pm$ 0.0531 & \textbf{4.710 $\pm$ 0.0746} & 4.686 $\pm$ 0.1053 & 3.253 $\pm$ 0.1248 \\
Kin8nm &  0.897 $\pm$ 0.010 & 1.007 $\pm$ 0.0486  & 0.999 $\pm$ 0.0479 & 1.121 $\pm$ 0.0332 &\textbf{1.125 $\pm$ 0.0212} & 0.928 $\pm$ 0.0446 \\
Power & -2.890 $\pm$ 0.010 & -2.830 $\pm$ 0.0313 & -2.821 $\pm$ 0.0316  & -2.809 $\pm$ 0.0293 &\textbf{-2.814 $\pm$ 0.0323} & -3.077 $\pm$ 0.0816 \\
Wine &  -0.980  $\pm$ 0.013 & \textbf{-0.926 $\pm$ 0.0487} & -0.936 $\pm$ 0.0643 & -0.938 $\pm$ 0.0581  & -0.938 $\pm$ 0.0486 & -1.213 $\pm$ 0.0831\\
Protein &  -2.992 $\pm$ 0.006 & -2.957 $\pm$ 0.0121 & -2.945 $\pm$ 0.0139 & -2.941 $\pm$ 0.0128 & \textbf{-2.941 $\pm$ 0.0130} & -3.049 $\pm$ 0.0382 \\
Year &  -3.622 $\pm$ NA & -3.604 $\pm$ NA & -3.599 $\pm$ NA  &  -3.598 $\pm$ NA &\textbf{-3.597 $\pm$ NA}  & -3.974 $\pm$ NA\\
\bottomrule\\
\end{tabular}

}
  \vspace{-0.5cm}
\end{table*}

 The first comparison method is \textit{probabilistic backpropagation (PBP)} \citep{pbp2015}, an approximate Bayesian inference framework  for neural network models. 
In PBP, each element of the weight matrices in all layers (input layer and hidden layer if we have one-hidden layer) is assumed to be Gaussian distributed with a scalar precision parameter in the prior distribution. Furthermore, a Gamma distribution is imposed on the noise precision parameter as a prior distribution and on the precision parameter as a hyper-prior distribution: 
\begin{align}
    p(W_0) &=\prod_{i=1}^{h_d}\prod_{j}^{d}\Nrm(w_{0,ij}|0, \lambda^{-1}), \\
    p(\vw_1) &=\prod_{k=1}^{h_d}\Nrm(w_{1,k}|0, \lambda^{-1}), \\
        p(\lambda) &= \mbox{Gam}(\lambda|\alpha_0^\lambda, \beta_0^\lambda), \\
            p(\gamma) &= \mbox{Gam}(\gamma|\alpha_0^\gamma, \beta_0^\gamma) 
\end{align}
The approximate posterior is assumed to be factorized for computational tractability, given by 
\begin{align}
   & q(W_0, \vw_1, \lambda, \gamma) \nonumber \\
   & = \prod_{i=1}^{h_d}\prod_{j}^{d}\Nrm(w_{0,ij}|m_{ij}, v_{ij})
    \prod_{k=1}^{h_d}\Nrm(w_{1,k}|m'_{k}, v'_{k}) \nonumber \\
    &\quad \quad \mbox{Gam}(\lambda|\alpha^\lambda, \beta^\lambda)
    \mbox{Gam}(\gamma|\alpha^\gamma, \beta^\gamma),
\end{align} where $\{ m_{ij}, v_{ij}, m'_k, v'_k, \alpha^\lambda, \beta^\lambda, \alpha^\gamma, \alpha^\gamma \}$ are posterior parameters.

PBP uses \textit{assumed density filtering} (ADF) for estimating the posterior parameters \citep{adf1982}. ADF can be viewed as a simpler version of EP. It maintains a global approximation only and as a result performs poorer than EP.

We use the same prior and posterior configurations as in PBP in other comparison method such as EP, SEP, SEP with only clipping, a scalable VI method for neural networks described in \citep{NIPS2011_graves} and it's privatized version, DP-VI. We derive the variational inference procedure in  \suppsecref{dpvi_derivation} and implement the differentially private version of it in PyTorch. 

We consider the 90\% of the original dataset randomly subsampled without replacement as a training dataset and the remaining 10\% as a test dataset. All the training datasets are normalized to have zero mean and unit variance on their input features and targets. 

For SEP, clipped SEP and DP-SEP, we fix the damping factor to $1/N$. We also fix the clipping norm to $C=1$  for  DP-SEP. Further experiments on DP-SEP with different clipping norms can be found in \suppsecref{clip_effect_dpsep}. We set the privacy budget to $\epsilon=1$ and $\delta=10^{-5}$ in DP-SEP. In DP-VI experiments we fixed $\delta=10^{-5}$ and set $\sigma$ value to get a final $\epsilon \approx 1$. The detailed hyper-parameter setting for VI and DP-VI experiments can be found in \tabref{vi_settings}  and \tabref{dpvi_settings} in \suppsecref{dpvi_tables}.

\tabref{test_rmse} and \tabref{test_ll}  shows the average test RMSE and test log-likelihood after 10 independent runs for each dataset except for \textit{Year}, where only one split is performed according to the recommendations of the dataset\footnote{See: \url{https://archive.ics.uci.edu/ml/datasets/yearpredictionmsd}}. 

When comparing between non-private algorithms, the RMSE and test log-likelihood indicates that EP based methods (EP, SEP and SEP clipped) give better posterior estimates than those obtained by VI. 
In terms of DP-SEP performance over the different datasets, the results show that our algorithm gives better approximates than those from DP-VI and that it is comparable to SEP and even better in some cases as for \textit{Kin8nm}.  
In fact, 
clipping the norm of the natural parameters and the intermediate approximating factor on the SEP algorithm has a positive effect on the original algorithm and reduces the test averaged RMSE in most cases. 
This seems to indicate that clipping  acts as a regularizer (or a constraint) for the posterior to be well concentrated.

In \tabref{test_rmse_rebuttal} in \suppsecref{sgd_estimates}, we also show the point estimate (through the usual stochastic gradient descent) which helps gauging how well these approximate inference methods are performing in an absolute sense.
The results show that in most of the UCI datasets we  considered, DP-SEP outperforms the point estimate.




\section{Conclusions and future work}

In this work, we have presented differentially private stochastic expectation propagation (DP-SEP), a novel algorithm to perform private approximate Bayesian inference based on the SEP algorithm. In DP-SEP, we add carefully calibrated noise to natural parameters to obtain a differentially private posterior distribution. 
We provide a theoretical analysis on how the noise added for privacy affects the accuracy on the posterior distribution.  


The clustering experiments under the Mixture of Gaussians model with a relatively small synthetic dataset shows that DP-SEP produces approximate posterior distributions that present higher uncertainty than those generated by non-private methods due to the poor sensitivity. We also provide quantitative results comparing the ground truth parameters and the posterior parameters by DP-SEP, where relaxing the privacy constraints improves the accuracy of the private posterior distributions. We also test DP-SEP on real-world datasets for regression tasks by implementing DP-SEP for a neural network model. The results show that DP-SEP often yields the posterior distributions that are better than those by SEP, thanks to the help of clipping natural parameters and large dataset sizes.

SEP provides better uncertainty estimates than variational inference, which was demonstrated in the experiments we presented as well as in existing literature. We look forward to extending our private SEP to more large-scale scenarios such as federated learning settings, which can be done in a straightforward way. 

\subsubsection*{Acknowledgments}

We thank the support, computational resources, and services provided by
the Canada CIFAR AI Chairs program (at AMII) and the Digital Research Alliance of Canada (Compute Canada).
%
%
This work was done while M. Vinaroz was a visiting international research student in the department of computer science at University of British Columbia.
We thank our anonymous reviewers for their constructive feedback.
Their feedback helped us improve our manuscript significantly. 

\bibliography{ms}
\bibliographystyle{tmlr}

\newpage 
\appendix

\begin{center}
    {\LARGE\textbf{Appendix}}
\end{center}



\section{KL between SEP and clipped SEP}\label{supp:kl_sep_clipsep}

Denote the posterior distribution of SEP by  $q := \Nrm(\vmu_q, \Sigma_q)$ and the posterior distribution of SEP with clipping threshold $C$ by  $p := \Nrm(\vmu_p, \Sigma_p)$ where the following relation for natural parameters holds:  $\eta_p = \eta_q / \max \left (1, \frac{\| \eta_q \|_2}{C} \right )$ and  $\Lambda_p = \Lambda_q / \max \left (1, \frac{\| \Lambda_q \|_F}{C} \right )$. Then the KL-divergence between $q$ and $p$ can be expressed in terms of $q$ by:

$$D_{kl}[q||p] = \frac{1}{2} \left[ d \max \left (1, \frac{\| \Lambda_q \|_F}{C} \right)   + b^2/ \max \left (1, \frac{\| \Lambda_q \|_F}{C} \right) (\Lambda_q^{-1}\eta_q )^{\top} \eta_q - d +  d \log \max \left (1, \frac{\| \Sigma_q \|_F}{C} \right) \right]$$

where $b =  \max \left (1, \frac{\| \Lambda_q \|_F}{C} \right) / \max \left (1, \frac{\| \eta_q \|_F}{C} \right) - 1  $

\begin{proof}

The closed form for the KL-divergence is:

$$ D_{kl}[q||p] = \frac{1}{2} \left[  Tr(\Sigma_{p}^{-1}\Sigma_{q}) + (\mu_{p} - \mu_{q})^{\top}\Sigma_{p}^{-1}(\mu_{p} - \mu_{q}) - d + \log\left( \frac{| \Sigma_{p} | }{|\Sigma_{q} | } \right) \right] \nonumber $$

we can rewrite the KL-divergence in terms of the natural parameters ($\Sigma = \Lambda^{-1} \mbox{ and } \mu = \Lambda^{-1} \eta$) :

\begin{align}
\label{eq:kl_nat_params_clip}
 D_{kl}[q||p] = \frac{1}{2} \left[  Tr(\Lambda_{p}\Lambda_{q}^{-1}) + (\Lambda_{p}^{-1} \eta_{p} - \Lambda_{q}^{-1} \eta_{q})^{\top}\Lambda_{p}(\Lambda_{p}^{-1} \eta_{p} - \Lambda_{q}^{-1} \eta_{q}) - d + \log\left( \frac{| \Lambda_{p}^{-1} | }{|\Lambda_{q}^{-1} | } \right) \right] 
\end{align}

We have by definition of $\Lambda_p$ that $\Lambda_p^{-1} = \left[  \Lambda_q / \max \left (1, \frac{\| \Lambda_q \|_F}{C} \right) \right]^{-1} = \max \left (1, \frac{\| \Lambda_q \|_F}{C} \right)  \Lambda_q^{-1} $ and

$| \Lambda_{p}^{-1} | = \left[ \max \left (1, \frac{\| \Sigma_q \|_F}{C} \right) \right]^{d} | \Lambda_q^{-1}| $ so the logarithmic term becomes:

$$\log\left( \frac{| \Lambda_{p}^{-1} | }{|\Lambda_{q}^{-1} | } \right) = \log(\max \left (1, \frac{\| \Sigma_q \|_F}{C} \right)^{d} ) = d \log \max \left (1, \frac{\| \Sigma_q \|_F}{C} \right)$$

For the trace term we have:

 $Tr(\Lambda_{p}\Lambda_{q}^{-1}) = Tr( 1/ \max \left (1, \frac{\| \Lambda_q \|_F}{C} \right) \Lambda_{q}\Lambda_{q}^{-1} ) = Tr( 1/ \max \left (1, \frac{\| \Lambda_q \|_F}{C} \right) I_{d} ) = d \max \left (1, \frac{\| \Lambda_q \|_F}{C} \right)$

$$(\Lambda_{p}^{-1} \eta_{p} - \Lambda_{q}^{-1} \eta_{q}) =  \left[ \max \left (1, \frac{\| \Lambda_q \|_F}{C} \right) / \max \left (1, \frac{\| \eta_q \|_F}{C} \right) - 1 \right] \Lambda_q^{-1}\eta_q = b \Lambda_q^{-1}\eta_q $$

Thus the quadratic term can be rewritten as:
\begin{align}
&(\Lambda_{p}^{-1} \eta_{p} - \Lambda_{q}^{-1} \eta_{q})^{\top}\Lambda_{p}(\Lambda_{p}^{-1} \eta_{p} - \Lambda_{q}^{-1} \eta_{q}) = b^2/ \max \left (1, \frac{\| \Lambda_q \|_F}{C} \right) (\Lambda_q^{-1}\eta_q )^{\top} \Lambda_q \Lambda_q^{-1}\eta_q \nonumber \\
&=
b^2/ \max \left (1, \frac{\| \Lambda_q \|_F}{C} \right) (\Lambda_q^{-1}\eta_q )^{\top} \eta_q \nonumber
\end{align}

Then the KL-divergence can be expressed in terms of the natural parameters of q by:

$$D_{kl}[q||p] = \frac{1}{2} \left[ d \max \left (1, \frac{\| \Lambda_q \|_F}{C} \right)   + b^2/ \max \left (1, \frac{\| \Lambda_q \|_F}{C} \right) (\Lambda_q^{-1}\eta_q )^{T} \eta_q - d +  d \log \max \left (1, \frac{\| \Sigma_q \|_F}{C} \right) \right]$$
 
\end{proof}

\section{Proof of Thm. 4.1}\label{supp:proof_dkl}

\begin{proof}
Denote the posterior distribution from SEP by  $q := \Nrm(\vmu_q, \Sigma_q)$ and the posterior distribution of DP-SEP to be  $p := \Nrm(\vmu_p, \Sigma_p)$. Mean and moment parameters of $q$ can be expressed in terms of the natural parameters by $\Sigma_q =  \Lambda_q^{-1} \mbox{ and } \mu_q = \Lambda_q^{-1} \eta_q$ and the those of $p$ verify the following relations: 

$$\eta_p = \eta_q  + \ve \mbox{ where } \ve_{i} \sim \Nrm(0, \sigma_1^2)$$

$$\Lambda_p = \Lambda_q  + \E + A  \mbox{ where } \E_{ij} \sim \Nrm(0, \sigma_2^2)$$

Here, we define 
\begin{align}\label{eq:A}
    A = ( -\lambda_{min} (\Lambda_q + \E) + \rho) I
\end{align}
to ensure that $ \Lambda_p$ is positive definite by taking the minimum eigenvalue of $\Lambda_q + \E $ and adding a small positive constant $\rho$ such that $\lambda_{min}(\Lambda_q+ \E + A) \geq \rho$. 

The KL-divergence is written in closed form in terms of natural parameters:
\begin{align}
\label{eq:kl_nat_params}
 &\mathbb{E}_{\ve}\mathbb{E}_\E (D_{kl}[q||p])  \nonumber \\
 &= \frac{1}{2} \biggl[  \mathbb{E}_\E (Tr( \Lambda_{p}\Lambda_{q}^{-1})) + \mathbb{E}_{\ve}\mathbb{E}_\E((\Lambda_{p}^{-1} \eta_{p} - \Lambda_{q}^{-1} \eta_{q})^{\top} \Lambda_{p}( \Lambda_{p}^{-1} \eta_{p} - \Lambda_{q}^{-1} \eta_{q})) \nonumber \\
 & +\mathbb{E}_\E \left( \log\left( \frac{| \Lambda_{q} | }{|\Lambda_{p} | } \right) \right)  - d \biggr].
\end{align}

In bounding some of these terms, we rely on  the tail probability of a Gaussian random matrix. First, we re-formulate $\E$ as a matrix Gaussian series where \begin{align}\label{eq:Z}
    \E &:= \sum_{k=1}^{\frac{d(d+1)}{2}} \sigma_2 \gamma_k B_k, 
\end{align} where $\gamma_k$ is a standard normal Gaussian random variable and $\{B_k\}$ is a finite sequence of fixed symmetric (Hermitian) matrices. Then, due to Theorem 4.6.1. in \cite{MAL-048},
\begin{align}\label{eq:upper_max}
    \mathbb{E}_\E[\lambda_{max}(\E)] \leq \sqrt{2v(\E)\log(d)},
\end{align}
where the matrix variance statistic of the sum is denoted by
$v(\E) = \|\sum_k \sigma_2^2 B_k B_k^T \|$
and by symmetry 
(since $-\E$ has the same distribution as $\E$), 
\begin{align}\label{eq:lower_bound_E}
    \mathbb{E}_{\E} [\lambda_{min}(\E)] &=
     \mathbb{\E}_{\E} [\lambda_{min}(-\E)], \nonumber \\
     & = - \mathbb{E}_{\E} [\lambda_{max}(\E)], \nonumber \\
     & \geq  - \sqrt{2v(\E)\log(d)}.
\end{align}
Furthermore, for all $M \geq 0$: 
\begin{align}\label{eq:eigen_max_prob}
    P(\lambda_{max}(\E) \leq M ) \geq 1 - d \exp{ \left(\frac{-M^2}{2v(\E)}\right)} 
\end{align}
and
\begin{align}\label{eq:eigen_min_prob}
    P(\lambda_{min}(\E) \geq -M ) \geq 1 - d \exp{ \left(\frac{-M^2}{2v(\E)}\right)} 
\end{align}

Now we take a look at each term below.

\textbf{First term:} 
\begin{align}
&\mathbb{E}_\E (Tr( \Lambda_{p}\Lambda_{q}^{-1}))  \\ 
&= \mathbb{E}_\E Tr( I + \E\Lambda_{q}^{-1} + A\Lambda_{q}^{-1}) \\
&=  Tr( I) + \mathbb{E}_\E(Tr(\E\Lambda_{q}^{-1})) + \mathbb{E}_\E Tr(A\Lambda_{q}^{-1})  \\
& = Tr( I) + Tr(\mathbb{E}_\E(\E\Lambda_{q}^{-1})) + Tr(\mathbb{E}_\E(A\Lambda_{q}^{-1})), \\ 
& \mbox{ due to  \url{https://statproofbook.github.io/P/mean-tr}}  \\
&=d + Tr(\mathbb{E}_\E(( -\lambda_{min} (\Lambda_q + \E) + \rho) I \Lambda_{q}^{-1})),
\mbox{ since } \mathbb{E}_\E(\E\Lambda_{q}^{-1})=0, \mbox{due to \eqref{A}}
 \\
& \leq d + Tr(\mathbb{E}_\E(( -\lambda_{min} (\Lambda_q) -\lambda_{min}(\E) + \rho)) \Lambda_{q}^{-1}), \mbox{ by Weyl's inequality}\\
& = d + Tr((-\lambda_{min} (\Lambda_q) - \mathbb{E}_\E(\lambda_{min}(\E)) + \rho) \Lambda_{q}^{-1}) \\
& \leq d + Tr((-\lambda_{min} (\Lambda_q) + \sqrt{2v(\E)\log(d)} + \rho) \Lambda_{q}^{-1}), \mbox{ due to \eqref{lower_bound_E}}
\end{align}

\textbf{Second term:} 
\begin{align}
&\mathbb{E}_{\ve}\mathbb{E}_\E(( \Lambda_{p}^{-1} \eta_{p} - \Lambda_{q}^{-1} \eta_{q})^{\top} \Lambda_{p}(\Lambda_{p}^{-1} \eta_{p} - \Lambda_{q}^{-1} \eta_{q})) \nonumber \\  
&= \mathbb{E}_{\ve}\mathbb{E}_\E((\eta_{p}^{\top}\Lambda_{p}^{-1}  - \eta_{q}^{\top}\Lambda_{q}^{-1} )( \eta_{p} - \Lambda_{p}\Lambda_{q}^{-1} \eta_{q}))  \nonumber \\
&=  \mathbb{E}_{\ve}\mathbb{E}_\E(((\eta_{q} + \ve)^{\top} \Lambda_{p}^{-1}  - \eta_{q}^{\top}\Lambda_{q}^{-1} )( (\eta_{q} + \ve )- \Lambda_{p}\Lambda_{q}^{-1} \eta_{q})) \nonumber \\
& = \mathbb{E}_{\ve}\mathbb{E}_\E((\eta_{q}^{\top} \Lambda_{p}^{-1} + \ve^{\top}\Lambda_{p}^{-1}  - \eta_{q}^{\top}\Lambda_{q}^{-1} )( \eta_{q} + \ve - \Lambda_{p}\Lambda_{q}^{-1} \eta_{q}))  \nonumber \\
&= \mathbb{E}_{\ve}\mathbb{E}_\E( \eta_{q}^{\top} \Lambda_{p}^{-1} \eta_q + \eta_{q}^{\top} \Lambda_{p}^{-1} \ve - \eta_{q}^{\top} \Lambda_{q}^{-1}\eta_q + \ve^{\top} \Lambda_{p}^{-1} \eta_q + \ve^{\top}\Lambda_{p}^{-1} \ve - \ve^{\top} \Lambda_{q}^{-1} \eta_q  \nonumber \\
& - \eta_{q}^{\top}\Lambda_{q}^{-1} \eta_q - \eta_{q}^{\top}\Lambda_{q}^{-1}\ve  + \eta_{q}^{\top}\Lambda_{q}^{-1}\Lambda_{p}\Lambda_{q}^{-1} \eta_{q})  \nonumber \\
& =\mathbb{E}_\E( \eta_{q}^{\top} \Lambda_{p}^{-1} \eta_q  - \eta_{q}^{\top} \Lambda_{q}^{-1}\eta_q + \sigma_1^{2}Tr(\Lambda_{p}^{-1}) - \eta_{q}^{\top}\Lambda_{q}^{-1} \eta_q  + \eta_{q}^{\top}\Lambda_{q}^{-1}\Lambda_{p}\Lambda_{q}^{-1} \eta_{q}) \nonumber \\
&= \mathbb{E}_\E( \eta_{q}^{\top}\Lambda_{p}^{-1} \eta_q ) - \eta_{q}^{\top} \Lambda_{q}^{-1}\eta_q + \sigma_1^{2}\mathbb{E}_\E(Tr(\Lambda_{p}^{-1})) - \eta_{q}^{\top}\Lambda_{q}^{-1} \eta_q + \mathbb{E}_\E(\eta_{q}^{\top}\Lambda_{q}^{-1}\Lambda_{p}\Lambda_{q}^{-1} \eta_{q})  \nonumber \\
&= \mathbb{E}_\E( \eta_{q}^{\top}\Lambda_{p}^{-1} \eta_q ) - \eta_{q}^{\top} \Lambda_{q}^{-1}\eta_q + \sigma_1^{2}\mathbb{E}_\E(Tr(\Lambda_{p}^{-1})) + \mathbb{E}_\E(\eta_{q}^{\top}\Lambda_{q}^{-1}A \Lambda_{q}^{-1} \eta_{q}) \nonumber \\
& \leq  \sum_{i=1}^{d} b_{i}^2 \left( \dfrac{ 1 + \frac{\rho  - \lambda_{min}(\Lambda_q) + \sqrt{2v(\E)\log(d)}  }{\lambda_{i}(\Lambda_q)}   }{\left(1+\frac{-2M - \lambda_{min}(\Lambda_q) +  \rho}{\lambda_{i}(\Lambda_q)} \right) \left(1+ \frac{2M - \lambda_{min}(\Lambda_q)  +\rho }{\lambda_{i}(\Lambda_q)} \right)} \right) - \eta_{q}^{\top} \Lambda_{q}^{-1}\eta_q  \nonumber \\
&+ \sigma_1^{2} \lambda_{max}(\Sigma_{q}) \sum_{i=1}^d \left( \dfrac{ 1 + \frac{\rho  - \lambda_{min}(\Lambda_q) + \sqrt{2v(\E)\log(d)}  }{\lambda_{i}(\Lambda_q)}   }{\left(1+\frac{-2M - \lambda_{min}(\Lambda_q) +  \rho}{\lambda_{i}(\Lambda_q)} \right) \left(1+ \frac{2M - \lambda_{min}(\Lambda_q)  +\rho }{\lambda_{i}(\Lambda_q)} \right)} \right) \nonumber \\
& + \left(\sum_{i=1}^d a_i^2\right) \left( - \lambda_{min}(\Lambda_q) +  \sqrt{2v(\E)\log(d)} +  \rho\right)
\end{align} 

The inequality in the last step is due to:
\begin{enumerate}
    \item 
    For bounding $\mathbb{E}_\E(\eta_{q}^{\top} \Lambda_{p}^{-1} \eta_q)$, we first re-write $\Lambda_{p}= \Lambda_{q}^{\frac{1}{2}} \left[ I +   \Lambda_{q}^{-\frac{1}{2}} (\E+A) \Lambda_{q}^{-\frac{1}{2}} \right] \Lambda_{q}^{\frac{1}{2}}$ and denote 
    \begin{align}\label{eq:PE}
       \vb &:= \Lambda_{q}^{-\frac{1}{2}}\eta_{q} \\
       P(\E)&:=\Lambda_{q}^{-\frac{1}{2}} (\E+A) \Lambda_{q}^{-\frac{1}{2}}
    \end{align}  to simplify the notation in the following steps. Thus, we have:
    \begin{align}
    & \mathbb{E}_\E(\vb^{\top} \left[ I + P(\E)   \right]^{-1} \vb) \\
    & = \mathbb{E}_\E \left[Tr(\left[ I + P(\E)   \right]^{-1} \vb\vb^{\top} )\right] \\
    &= \mathbb{E}_\E\left( \sum_{i}^{d} \frac{b_{i}^2}{1 + \lambda_{i}(P(\E))}\right), \\
    &\leq  \left( \sum_{i}^{d} b_{i}^2 \mathbb{E}_\E \left[ \frac{1}{1 + \lambda_{i}(P(\E))} \right]\right), 
    %
\end{align}
where the last line is because the sum and average are linear operation, we can swap the order. Here  $\lambda_i(P(\E))$ denotes the $i$-th eigenvalue of $P(\E)$.

Now, we are interested in upper bounding $\mathbb{E}_\E \left[ \frac{1}{1 + \lambda_{i}(P(\E))} \right]$. 
To do so, we use the following  \citep{mathoverflow}: 
if a random variable $X$ is bounded by $h\leq X \leq H $, then  
\begin{align}\label{eq:one_over_X}
\mathbb{E}\left[\frac{1}{X}\right] \leq \frac{H+h -\mathbb{E}(X) }{Hh}
\end{align}
Because we are adding Gaussian noise and the domain of Gaussian noise is unbounded, $\lambda_{i}(P(\E))$ is not strictly bounded. Hence, using the tail bound of random Gaussian matrix, we achieve the bounds with high probability.

Here, we set $X = 1 + \lambda_i(P(\E))$. 
Recall $ \lambda_i(P(\E)) = \dfrac{\lambda_i(\E + A)}{\lambda_i(\Lambda_q)} = \dfrac{\lambda_i(\E)  + \lambda_i(A)}{\lambda_i(\Lambda_q)}$, $i \in \{1, \dots, d \}$, because $A$ is a diagonal matrix. 
We need to identify what $h$ and $H$ are that satisfy $h \leq 1 + \lambda_{i}(P(\E)) \leq H$. 
%

First, we know that the following holds:
\begin{align}
1 + \lambda_i(P(\E))
\leq 1+ \frac{\lambda_{max} (\E)+ \lambda_{max}(A)}{\lambda_{i}(\Lambda_q)} \leq 1 + \frac{M + \lambda_{max}(A)}{\lambda_{i}(\Lambda_q)}
\end{align}

where the last inequality is due to \eqref{eigen_max_prob}. Now, we find the upper bound for $\lambda_{max}(A)$:
\begin{align}
& \lambda_{max}(A)\\
&= \lambda_{max}[(-\lambda_{min}(\Lambda_q + \E) + \rho)I], \mbox{ by defition of $A$ given in \eqref{A}} \\
&= -\lambda_{min}(\Lambda_q + \E) + \rho  \\
& \leq -\lambda_{min}(\Lambda_q) - \lambda_{min}(\E) + \rho, \mbox{ by Weyl's inequality }  \\
& \leq -\lambda_{min}(\Lambda_q) + \lambda_{max}(\E) + \rho,  \mbox{ $-\E$ has the same distribution as $\E$ }\\
& \leq -\lambda_{min}(\Lambda_q) + M + \rho \mbox{ due to \eqref{eigen_max_prob}}
\end{align}

Hence, we establish the upper bound given by 
\begin{align}\label{eq:upper_bound}
 1 + \lambda_i(P(\E)) 
%
\leq 
H: = 
 \; 1 + \frac{2M + \rho - \lambda_{min} (\Lambda_q)}{\lambda_{i}(\Lambda_q)}
\end{align} with probability given in \eqref{eigen_max_prob}.

Now, it remains to lower bound, $1 + \lambda_{i}(P(\E))$. First, we know that $\lambda_{i}(P(\E)) = \lambda_i \left( \dfrac{\E + A}{\Lambda_q} \right) = \dfrac{\lambda_{i}(\E) + \lambda_{i}(A)}{\lambda_{i}(\Lambda_q)} \geq \dfrac{\lambda_{min}(\E) + \lambda_{min}(A)}{\lambda_{i}(\Lambda_q)}$.
Let's find a lower bound for $ \lambda_{min}(\E) + \lambda_{min}(A)$.
As before,
\begin{align}\label{eq:min_eig_a_upper_bound}
& \lambda_{min}(A)\\
&= \lambda_{min} [(-\lambda_{min}(\Lambda_q + \E) + \rho)I], \\
& = -\lambda_{min}(\Lambda_q + \E) + \rho  \\
& \geq -\lambda_{min}(\Lambda_q) -\lambda_{max}(\E) + \rho), \mbox{ by Weyl's inequality}  \\ 
& \geq -\lambda_{min}(\Lambda_q) + \lambda_{min}(\E) + \rho,  \mbox{ since $-\E$ has the same distribution as $\E$ } \\
& \geq -\lambda_{min}(\Lambda_q) - M + \rho,  \mbox{ due to \eqref{eigen_min_prob}}
\end{align}

From \eqref{eigen_min_prob} we have that $\lambda_{min}(\E) \geq -M$ with probability at least $1 - d \exp{ \left( \frac{-M^2}{2v(\E)}\right)}$. Hence, we arrive at:
\begin{align}\label{eq:lower_bound_a}
    &\lambda_{min}(\E+A) = \lambda_{min}(\E) + \lambda_{min}(A) \geq -\lambda_{min}(\Lambda_q) - 2M + \rho,  
\end{align}
Assuming $-\lambda_{min}(\Lambda_q) - 2M + \rho \leq 0$ (as we set $\rho$ to be small),
\begin{align}
\lambda_{i}(P(\E))  \geq \dfrac{\lambda_{min}(\E) + \lambda_{min}(A)}{\lambda_{i}(\Lambda_q)} \geq \dfrac{-2M + \rho - \lambda_{min} (\Lambda_q)}{\lambda_{i}(\Lambda_q)}.
\end{align}
Hence, the lower bound is 
\begin{align}
  h:= \;  1 + \frac{-2M + \rho -\lambda_{min}(\Lambda_q)}{\lambda_{i}(\Lambda_q)} 
\leq 1 + \lambda_{min}(P(\E))
\leq  1 + \lambda_{i}(P(\E)),
\end{align}
with probability given in \eqref{eigen_min_prob}.

Hence, using the bound in \eqref{one_over_X}, we bound the following:
%
\begin{align}\label{eq:intermediate_ineq}
&\mathbb{E}_\E \left( \frac{1}{1 + \lambda_{i}(P(\E))} \right)  \\
&\leq 
\dfrac{ 1 + \frac{-2M - \lambda_{min}(\Lambda_q) +  \rho}{\lambda_{i}(\Lambda_q)} +1+ \frac{2M - \lambda_{min}(\Lambda_q)  +\rho }{\lambda_{i}(\Lambda_q)} -1 - \mathbb{E}_\E \left( \lambda_{i}(P(\E)) \right) }{\left(1+\frac{-2M - \lambda_{min}(\Lambda_q) +  \rho}{\lambda_{i}(\Lambda_q)} \right) \left(1+ \frac{2M - \lambda_{min}(\Lambda_q)  +\rho }{\lambda_{i}(\Lambda_q)} \right)} \\
& \qquad = \dfrac{ 1 + \frac{2\rho  - 2\lambda_{min}(\Lambda_q)  }{\lambda_{i}(\Lambda_q)} - \mathbb{E}_\E \left( \lambda_{i}(P(\E)) \right) }{\left(1+\frac{-2M - \lambda_{min}(\Lambda_q) +  \rho}{\lambda_{i}(\Lambda_q)} \right) \left(1+ \frac{2M - \lambda_{min}(\Lambda_q)  +\rho }{\lambda_{i}(\Lambda_q)} \right)}
\end{align}

with probability at least $1 - d \exp{ \left( \frac{-M^2}{2v(\E)}\right)}$. Now, it remains to upper bound the expectation term in $-\mathbb{E}_{E} [\lambda_{i}(P(\E))]$. Following the same trick as before, we lower bound $\mathbb{E}_{\E} [\lambda_{i}(P(\E))]$:
\begin{align}\label{eq:ElamPE}
    &\mathbb{E}_{\E} [\lambda_{i}(P(\E))]  \\ 
    & = \frac{1}{\lambda_{i}(\Lambda_q)}\mathbb{E}_{\E} \lambda_{i}(A+\E), \mbox{ by definition of } P(\E) \\
    &= \frac{1}{\lambda_{i}(\Lambda_q)}( \mathbb{E}_{\E} \lambda_{i}(A) + \mathbb{E}_{\E} \lambda_{i}(\E)), \\
    & \geq 
    \frac{1}{\lambda_{i}(\Lambda_q)}( \mathbb{E}_{\E} \lambda_{i}(A) + \mathbb{E}_{\E} \lambda_{min}(\E)), \mbox{ because } \mathbb{E}_{\E} \lambda_{min}(\E) \leq \mathbb{E}_{\E} \lambda_{i}(\E)
\leq \mathbb{E}_{\E} \lambda_{max}(\E) \\
&\geq 
    \frac{1}{\lambda_{i}(\Lambda_q)}( \mathbb{E}_{\E} \lambda_{i}(A) -\sqrt{2v(\E)\log(d)}), \mbox{ because of \eqref{lower_bound_E}}   \\
    %
    %
    & \geq \frac{1}{\lambda_{i}(\Lambda_q)}(-\lambda_{min}(\Lambda_q)   - \sqrt{2v(\E)\log(d)} + \rho), 
\end{align}
where
%
%
%
\begin{align}
&\mathbb{E}_{\E}[\lambda_{i}(A)] \\
&= \mathbb{E}_{\E}[- \lambda_{min}(\Lambda_q + \E) + \rho], \mbox{ by definition of } A  \\
& \geq - \lambda_{min}(\mathbb{E}_{\E} (\Lambda_q + \E )) +  \rho , \\
&\mbox{ since $\lambda_{min}$ is a concave function and $-\lambda_{min}$ is convex. With Jensen's inequality}\\
& \geq -\lambda_{min}(\Lambda_q)  + \rho, \mbox{ since } \mathbb{E}_{\E} (\E) = 0.
\end{align}

\item 
As before, we re-write $\Lambda_{p}= \Lambda_{q}^{\frac{1}{2}} \left[ I +   \Lambda_{q}^{-\frac{1}{2}} (\E+A) \Lambda_{q}^{-\frac{1}{2}} \right] \Lambda_{q}^{\frac{1}{2}}$. The inverse is  $\Lambda^{-1}_{p}= \Lambda_{q}^{-\frac{1}{2}} \left[ I +   \Lambda_{q}^{-\frac{1}{2}} (\E+A) \Lambda_{q}^{-\frac{1}{2}} \right]^{-1} \Lambda_{q}^{-\frac{1}{2}}$. 
Therefore, 

\begin{align}
&\mathbb{E}_\E(Tr( \Lambda_{p}^{-1}))\nonumber \\
&=\mathbb{E}_\E(Tr( \Lambda_{q}^{-\frac{1}{2}} \left[ I +   \Lambda_{q}^{-\frac{1}{2}} (E+A) \Lambda_{q}^{-\frac{1}{2}} \right]^{-1} \Lambda_{q}^{-\frac{1}{2}}))\\
&=\mathbb{E}_\E(Tr( \left[ I +   \Lambda_{q}^{-\frac{1}{2}} (\E+A) \Lambda_{q}^{-\frac{1}{2}} \right]^{-1} \Sigma_{q})), \mbox{ due to the Cyclic property of Trace and }  \Sigma_{q} = \Lambda^{-1}_q\\
& \leq \mathbb{E}_\E \left(\sum_{i=1}^d \frac{\lambda_i(\Sigma_{q})}{1 + \lambda_i(P(\E))} \right), \mbox{ due to \eqref{PE} } \\
& \leq \lambda_{max}(\Sigma_{q}) 
 \sum_{i=1}^d \left( \mathbb{E}_\E \left[ \frac{1}{1 + \lambda_i(P(\E))}\right] \right) \\
& \leq \lambda_{max}(\Sigma_{q}) \sum_{i=1}^d \left( \dfrac{ 1 + \frac{2\rho  - 2\lambda_{min}(\Lambda_q)  }{\lambda_{i}(\Lambda_q)} + \frac{\sqrt{2v(\E)\log(d)} + \lambda_{min}(\Lambda_q) - \rho}{\lambda_i(\Lambda_q)} }{\left(1+\frac{-2M - \lambda_{min}(\Lambda_q) +  \rho}{\lambda_{i}(\Lambda_q)} \right) \left(1+ \frac{2M - \lambda_{min}(\Lambda_q)  +\rho }{\lambda_{i}(\Lambda_q)} \right)} \right),  \mbox{due to \eqref{intermediate_ineq}} \\
& \leq \lambda_{max}(\Sigma_{q}) \sum_{i=1}^d \left( \dfrac{ 1 + \frac{\rho  - \lambda_{min}(\Lambda_q) + \sqrt{2v(\E)\log(d)}  }{\lambda_{i}(\Lambda_q)}   }{\left(1+\frac{-2M - \lambda_{min}(\Lambda_q) +  \rho}{\lambda_{i}(\Lambda_q)} \right) \left(1+ \frac{2M - \lambda_{min}(\Lambda_q)  +\rho }{\lambda_{i}(\Lambda_q)} \right)} \right),
\end{align}
\item

\begin{align}
& \mathbb{E}_\E(\eta_{q}^{\top}\Lambda_{q}^{-1}A \Lambda_{q}^{-1} \eta_{q}) \\
&= \mathbb{E}_\E(\va\trp A \va), \mbox{ where } \va = \Lambda_{q}^{-1}\eta_{q}\\
&= \mathbb{E}_\E \sum_{i=1}^d a_i^2 (-\lambda_{min}(\Lambda_q + \E) + \rho), \mbox{ due to the definition of } A \\
&= \left(\sum_{i=1}^d a_i^2\right) \mathbb{E}_\E (-\lambda_{min}(\Lambda_q + \E) + \rho)  , \mbox{ since expectation is a linear operation and $\va$ is constant in $\E$ } \\
& \leq \left(\sum_{i=1}^d a_i^2\right) \left( - \lambda_{min}(\Lambda_q) - \mathbb{E}_{\E}[ \lambda_{min}( \E)] +  \rho\right) , \\
&\mbox{ by Weyl's inequality and since expectation preserves inequality}\\
&\quad = \left(\sum_{i=1}^d a_i^2\right) \left( - \lambda_{min}(\Lambda_q) +  \mathbb{E}_{\E}[ \lambda_{max}( \E)] +  \rho\right) , \mbox{ due to the symmetry of } \E \\
& \quad \leq \left(\sum_{i=1}^d a_i^2\right) \left( - \lambda_{min}(\Lambda_q) +  \sqrt{2v(\E)\log(d)} +  \rho\right), 
\mbox{ due to \eqref{upper_max}}
\end{align}

\end{enumerate}

\textbf{Third term:} 

Since $\Lambda_{p}= \Lambda_{q}^{\frac{1}{2}} \left[ I +   \Lambda_{q}^{-\frac{1}{2}} (\E+A) \Lambda_{q}^{-\frac{1}{2}} \right] \Lambda_{q}^{\frac{1}{2}}$, 
\begin{align}
    &\mathbb{E}_\E \left( \log\left( \frac{| \Lambda_{q} | }{| \Lambda_{p} | } \right) \right)\\
    &=\mathbb{E}_\E \left( \log\left( \frac{| \Lambda_{q} | }{| \Lambda_{q}^{\frac{1}{2}} \left[ I +   \Lambda_{q}^{-\frac{1}{2}} (\E+A) \Lambda_{q}^{-\frac{1}{2}} \right] \Lambda_{q}^{\frac{1}{2}}| } \right) \right)\\
    & =  \mathbb{E}_\E \sum_{i=1}^d \log \left(\frac{1}{1 + \lambda_i(P(\E))} \right), \mbox{ due to \eqref{PE}} \\
    & =  \sum_{i=1}^d \mathbb{E}_\E \log \left(\frac{1}{1 + \lambda_i(P(\E))} \right), \mbox{ swapping $\mathbb{E}$ and $\sum_i$} \\
& \leq  \sum_{i=1}^d \log \left( \mathbb{E}_\E \left[\frac{1}{1 + \lambda_i(P(\E))} \right] \right), \mbox{ because $\log$ is concave}\\
& \leq \sum_{i=1}^d \log  \left( \dfrac{ 1 + \frac{\rho  - \lambda_{min}(\Lambda_q) + \sqrt{2v(\E)\log(d)}  }{\lambda_{i}(\Lambda_q)}   }{\left(1+\frac{-2M - \lambda_{min}(\Lambda_q) +  \rho}{\lambda_{i}(\Lambda_q)} \right) \left(1+ \frac{2M - \lambda_{min}(\Lambda_q)  +\rho }{\lambda_{i}(\Lambda_q)} \right)} \right)
\end{align}
where the final result is due to \eqref{intermediate_ineq}. 

\end{proof}

\subsection{Discussion of the bound}\label{supp:bound_no_noise}


In the following we want to give an insight of how the upper bound in \thmref{Dkl} becomes 0 when $N \rightarrow \infty$.
From the definition of $\sigma_1 = \sigma_2 = \sigma \frac{2C}{N} \rightarrow 0$ as $N \rightarrow \infty$. Recall
\begin{align}\label{eq:Dkl_2}
&\mathbb{E}_\ve \mathbb{E}_\E(D_{{kl}}[q || p])  \nonumber \\
& \leq \frac{1}{2} \biggr[\underbrace{ Tr((-\lambda_{min} (\Lambda_q) + \sqrt{2v(\E)\log(d)} + \rho) \Lambda_{q}^{-1})}_{\mbox{Term 1 }} \nonumber \\
& + \underbrace{ \sigma_1^{2} \lambda_{max}(\Sigma_{q}) \sum_{i=1}^{d}  \left( \dfrac{ h + H - 1 + \frac{\lambda_{min}(\Lambda_q)  + \sqrt{2v(\E)\log(d)} - \rho}{\lambda_{i}(\Lambda_q)} }{h H}\right)}_{\mbox{Term 2}} \\
& +\underbrace{ \sum_{i=1}^{d} b_{i}^{2}  \left( \dfrac{ h + H - 1 + \frac{\lambda_{min}(\Lambda_q)  + \sqrt{2v(\E)\log(d)} - \rho}{\lambda_{i}(\Lambda_q)} }{h H}\right)}_{\mbox{Term 3}} - \underbrace{\eta_{q}^{\top} \Lambda_{q}^{-1}\eta_q}_{\mbox{Term 4}}  \nonumber \\
& + \underbrace{\va \va^{\top} \left( - \lambda_{min}(\Lambda_q) +  \sqrt{2v(\E)\log(d)} +  \rho\right)}_{\mbox{Term 5}} +  \underbrace{\sum_{i=1}^d \log  \left( \dfrac{ h + H - 1 + \frac{\lambda_{min}(\Lambda_q)  + \sqrt{2v(\E)\log(d)} - \rho}{\lambda_{i}(\Lambda_q)}  }{h H } \right)
\biggr]}_{\mbox{Term 6}}  \nonumber 
\end{align}
where $b_i := (\Lambda_{q}^{\frac{-1}{2}}\eta_{q})_i$, $\va:= \Lambda_{q}^{-1} \eta_q$, $h = 1 + \frac{-2M - \lambda_{min}(\Lambda_q)+ \rho}{\lambda_{i}(\Lambda_q)} $ and $H = 1 + \frac{2M - \lambda_{min}(\Lambda_q)  +\rho }{\lambda_{i}(\Lambda_q)}$.

\begin{itemize}
    \item Term 1:
    \begin{align}
    &Tr((-\lambda_{\min} (\Lambda_q) + \sqrt{2v(\E)\log(d)} + \rho) \Lambda_{q}^{-1}) \\
    & = Tr((-\lambda_{\min} (\Lambda_q) + \rho) \Lambda_{q}^{-1}), \mbox{ due to $\lim_{\sigma_2 \to 0}v(\E) =0$} \\
    &= 0, \mbox{ because $A=0$ and $\rho=\lambda_{\min} (\Lambda_q$)}
    \end{align}
    
    \item Term 2: 
    \begin{align}
    \sigma_1^{2} \lambda_{max}(\Sigma_{q}) \sum_{i=1}^{d}  \left( \dfrac{ h + H - 1 + \frac{\lambda_{min}(\Lambda_q)  + \sqrt{2v(\E)\log(d)} - \rho}{\lambda_{i}(\Lambda_q)} }{h H}\right) = 0, \mbox{ due to $\sigma_1 \rightarrow 0$}
    \end{align}
    
    \item Term 3:
    \begin{align}
    & \sum_{i=1}^{d} b_{i}^{2}  \left( \dfrac{ h + H - 1 + \frac{\lambda_{min}(\Lambda_q)  + \sqrt{2v(\E)\log(d)} - \rho}{\lambda_{i}(\Lambda_q)} }{h H}\right) \\
    &= \sum_{i=1}^{d}  b_{i}^{2} \left( \dfrac{ h + H - 1 + \frac{\lambda_{min}(\Lambda_q) - \rho}{\lambda_{i}(\Lambda_q)} }{h H}\right), \mbox{ due to $\lim_{\sigma_2 \to 0}v(\E) =0$} \\
    &= \sum_{i=1}^{d} b_{i}^{2}  \left( \dfrac{ h + H - 1  }{h H}\right),  \mbox{ because $\rho=\lambda_{\min} (\Lambda_q$)}\\
    & = \eta_{q}^{\top} \Lambda_{q}^{-1}\eta_q, \mbox{ because $h=H=1$ and the definition of $b_i$} \\
    \end{align}
    
    \item Term 5:
    \begin{align}
    & \va \va^{\top} \left( - \lambda_{min}(\Lambda_q) +  \sqrt{2v(\E)\log(d)} +  \rho\right) \\
    & = \va \va^{\top} \left( - \lambda_{min}(\Lambda_q) +  \rho\right), \mbox{ due to $\lim_{\sigma_2 \to 0}v(\E) =0$} \\
    & = 0,  \mbox{ because $\rho=\lambda_{\min} (\Lambda_q$)}
    \end{align}
    
    \item Term 6: 
    \begin{align}
    & \sum_{i=1}^d \log  \left( \dfrac{ h + H - 1 + \frac{\lambda_{min}(\Lambda_q)  + \sqrt{2v(\E)\log(d)} - \rho}{\lambda_{i}(\Lambda_q)}  }{h H } \right) \\
    & = \sum_{i=1}^d \log  \left( \dfrac{ h + H - 1 + \frac{\lambda_{min}(\Lambda_q)  - \rho}{\lambda_{i}(\Lambda_q)}  }{h H } \right), \mbox{ due to $\lim_{\sigma_2 \to 0}v(\E) =0$} \\
    & = \sum_{i=1}^d \log  \left( \dfrac{ h + H - 1}{h H } \right), \mbox{ because $\rho=\lambda_{\min} (\Lambda_q$)} \\
    & = 0, \mbox{because $h=H=1$}
    \end{align}
    
\end{itemize}

\section{Point estimates}\label{supp:sgd_estimates}

We also show the point estimate (through the usual stochastic gradient descent) in \tabref{test_rmse_rebuttal}, which helps gauging how well these approximate inference methods are performing in an absolute sense.
\begin{table}[h]
\caption{RMSE on test data (UCI datasets)}
\label{tab:test_rmse_rebuttal}
\vskip 0.1in
\centering
\scalebox{0.8}{
\begin{tabular}{l c c }
\toprule
\multirow{4}{*}{Dataset} & 
\multicolumn{2}{c}{Avg. Test RMSE and Std. }  \\
\midrule
& Point Estimate & DP-SEP \\
\\
\midrule
Naval  &  \textbf{0.001 $\pm $ 0.0001} & 0.002 $\pm $ 0.0003  \\
Kin8nm & 0.091 $\pm $ 0.0015 & \textbf{0.078 $\pm $ 0.0022} \\
Power & 4.182 $\pm $ 0.0402 & \textbf{4.032 $\pm $ 0.1385} \\
Wine & 0.645 $\pm $ 0.0098 & \textbf{0.627 $\pm $ 0.0362} \\
Protein & \textbf{4.539 $\pm $ 0.0288} & 4.585 $\pm $ 0.0589  \\
Year & 8.932 $\pm$ NA &\textbf{8.862 $\pm $ NA} \\
\bottomrule\\
\end{tabular}
}
\end{table} 

\section{Differentially private variational inference under the neural network model}\label{supp:dpvi_derivation}

Similar to the idea of DP-VI, we derive the variational lower bound under the neural network model, and then apply DP-SGD to ensure the approximate posterior to be differentially private. 

For simplicity, we treat the noise precision $\gamma$ and the prior precision $\lambda$ as hyperparameters, and we impose priors only on the weights: 
\begin{align}
    p(W_0) &=\prod_{i=1}^{h_d}\Nrm(\vw_{0,i}|0, \lambda^{-1}I), \\
    p(\vw_1) &=\Nrm(\vw_{1}|0, \lambda^{-1}I), \\
\end{align}
The approximate posterior over the model parameters $\vtheta = \{W_0, \vw_1\}$ is given by
\begin{align}
  q(\vtheta) = q(W_0) q(\vw_1) = \prod_{i=1}^{h_d}\Nrm(\vw_{0,i}|\vm_{i}, diag(\vv_{i}))
    \Nrm(\vw_{1}|\vm', diag(\vv')) 
\end{align} where $\{ \vm_{i}, \vv_{i}, m'_k, v'_k \}$ are posterior parameters.
Recall from \secref{BNNs}, the likelihood of the mini-batch of the data under the neural network model is given by 
\begin{align}
    p(\Dat|\vtheta) := p(\vy|W_0, \vw_1, X_s, \gamma) = \prod_{n=1}^s \Nrm(y_n|z_{1,n},\gamma^{-1}) 
\end{align} where $\gamma$ is the noise precision and $z_{1,n}= \vw_1\trp\sigma(W_0\vx_n)$. 

The variational lower bound in this case can be written as
\begin{align}\label{eq:variational_lower_bound}
\mathbb{E}_{q(\vtheta)} \left[\log p(\Dat|\vtheta) \right] - D_{KL} [q(\vtheta)||p(\vtheta)].
\end{align} Given the factorizing Gaussian prior and posterior pairs, the KL divergence is closed form
\begin{align}
    D_{KL} [q(\vtheta)||p(\vtheta)] &= D_{KL} [q(W_0)||p(W_0)] + D_{KL} [q(\vw_1)||p(\vw_1)], \\
    &= \sum_{i=1}^{h_d} \frac{1}{2}\left[ \sum_{j=1}^{d+1} \lambda v_{ij} + \lambda \vm_i\trp\vm_i - (d+1) - (d+1) \log_e (\lambda) - \sum_{j=1}^{d+1} \log_e \left(v_{ij}\right) \right] \nonumber \\
    & \quad + \frac{1}{2}\left[ \sum_{i=1}^{h_d} \lambda v'_{i} + \lambda \vm'_i\trp\vm'_i - h_d  - h_d \log_e (\lambda) - \sum_{i=1}^{h_d} \log_e \left( v'_{i}\right) \right]
_d\end{align}

We can further expand the first term in \eqref{variational_lower_bound}
\begin{align}
&\mathbb{E}_{q(\vtheta)} \left[\log p(\Dat|\vtheta) \right] \nonumber \\
&= \sum_n \int q(W_0) q(\vw_1) \log \Nrm(y_n|\vw_1\trp\sigma(W_0\vx_n),\gamma^{-1}) \; d W_0 \; d_{\vw_1}, \\
&= \sum_n \int q(W_0)  \left[ \int \Nrm(\vw_1|\vm', \vv') \left\{ -\frac{\gamma}{2} (y_n - \vw_1\trp \vz_{0,n})^2 - \frac{1}{2}\log|2\pi \gamma^{-1} | \right\} \right]  \; d W_0, \\
&= \sum_n \int q(W_0)  \left[ -\frac{\gamma}{2}\left(y_n^2 - 2 y_n \vm'\trp \vz_{0,n} + Tr[(\vz_{0,n}\vz_{0,n}\trp)V' + \vm'\trp (\vz_{0,n}\vz_{0,n}\trp)  \vm' \right) - \frac{1}{2}\log|2\pi \gamma^{-1} | \right] \; d W_0
\end{align} where $\vz_{0,n} = \sigma(W_0 \vx_n)$ and $V'$ is a diagonal matrix where diagonal entries are $\vv'$. 
Due to the nonlinearity, the final integral is not analytically tractable. 
We use the Monte Carlo approximation to the integral by using $L$ samples drawn from $W_0^{(l)} \sim q(W_0)$: 
\begin{align}
     &\approx \frac{1}{L}\sum_{l=1}^L \sum_n   \left[ -\frac{\gamma}{2} \left(y_n^2 - 2 y_n \vm'\trp \sigma(W_0^{(l)} \vx_n) + Tr[(\sigma(W_0^{(l)} \vx_n)\sigma(W_0^{(l)} \vx_n)\trp)V'] + \vm'\trp (\sigma(W_0^{(l)} \vx_n)\sigma(W_0^{(l)} \vx_n)\trp) \vm' \right) \right] \nonumber \\
     & \quad - \frac{n}{2}\log|2\pi \gamma^{-1} | .
\end{align}
Applying DP-SGD to this objective function requires the sample-wise gradient to be limited by a clipping threshold $C$ and then adding the Gaussian noise with a noise scale tuned by $\sigma C$, where $\sigma$ is the privacy parameter. To compose privacy loss, we use the analytic moments accountant by \citep{wang2019subsampled}.   

\subsection{Bayesian neural networks hyper-parameter settings}\label{supp:dpvi_tables}

Here we give a detailed hyper-parameter setting used for computing the VI method and the privatized version under neural networks experiments on the UCI datasets. \tabref{vi_settings} and \tabref{dpvi_settings} reflects the number of epochs, batch size, learning rate, noise precision ($\gamma$), prior precision ($\lambda$), number of samples used in Monte Carlo approximation, KL-divergence regularizing parameter ($\beta$), clipping threshold ($C$) and the standard deviation for the Gaussian noise ($\sigma$) used for each dataset.

\begin{table}[h]
\caption{VI hyperparameter settings}
\label{tab:vi_settings}
\vskip 0.1in
\centering
\scalebox{0.8}{
\begin{tabular}{l c c c c c c c c}
\toprule
& epochs & batch size & learning rate & $\gamma$ & $\lambda$ & MC samples & $\beta$ \\
\midrule
Naval & 100 & 100 & $1 \cdot 10^{-4} $ & 20 & 1 & 10 & 1 \\
Kin8nm & 100 & 100 & $1 \cdot 10^{-3} $& 10 & 100 & 20 & 0.01  \\
Power & 200 & 100 & $9 \cdot 10^{-4} $ & 18 & 100 & 10  & 0.001  \\
Wine & 40 & 100 &  $1 \cdot 10^{-2} $ & 2 & 200 & 20 & 0.01  \\
Protein & 100 & 200 & $1 \cdot 10^{-5} $ & 50 & 100 & 20 & 2 \\
Year & 40 & 2000 & $1 \cdot 10^{-5} $ & 1.5 & 1 & 20 & 0.1 \\
\bottomrule\\
\end{tabular}
}
\end{table}

\begin{table}[h]
\caption{DP-VI hyperparameter settings with a privacy budget $\epsilon \approx 1$ and $\delta=10^{-5}$}
\label{tab:dpvi_settings}
\vskip 0.1in
\centering
\scalebox{0.8}{
\begin{tabular}{l c c c c c c c c c c }
\toprule
& epochs & batch size & learning rate & $\gamma$ & $\lambda$ & MC samples & $\beta$ & $C$ & $\sigma$ \\
\midrule
Naval  & 100  & 100 & $1 \cdot 10^{-4} $ &  20 & 1 & 10 & 1 & 5 & 1.3  \\
Kin8nm & 100 & 100& $1 \cdot 10^{-3} $ & 10 & 100 & 20 & 0.01 & 1 & 1.7 \\
Power & 200 & 100 &  $9 \cdot 10^{-4} $ & 18 & 100 & 20 & 0.001 & 1 & 2\\
Wine & 40 & 100 & $1 \cdot 10^{-2} $ & 2 & 200 & 20 & 0.01 & 5 & 30  \\
Protein & 100 & 200 & $1 \cdot 10^{-5} $& 50 & 100 & 20 & 2 & 1 & 1.2 \\
Year & 40 & 2000 & $1 \cdot 10^{-5} $ & 1.5 & 1 & 20 & 0.1 & 1 & 1.1 \\
\bottomrule\\
\end{tabular}
}
\end{table}

\section{Clipping effect on DP-SEP}\label{supp:clip_effect_dpsep}

We study in this section the effect that the clipping threshold $C$ choice has in DP-SEP posterior estimates for a fixed privacy budget of $\epsilon=1$ and $\delta=10^{-5}$. \tabref{test_rmse_clip_dpsep} and \tabref{test_ll_clip_dpsep} shows the averaged results and the standard deviation for the RMSE and test log-likelihood obtained for each dataset on 5 independent runs. As the value of $C$ increases, we are letting the natural parameters to retain more information but at the expense of adding higher variations due to the Gaussian noise addition that also depends on the clipping threshold.

\begin{table*}[!ht]
\caption{RMSE on test data (UCI datasets) from DP-SEP at different $C$}
\label{tab:test_rmse_clip_dpsep}
\vskip 0.1in
\centering
\scalebox{0.75}{
\begin{tabular}{l c c c c c c}
\toprule
\multirow{5}{*}{$C$} & 
\multicolumn{5}{c}{Avg. Test RMSE and Std. }  \\
\midrule

 & Naval  & Kin8nm & Power & Wine & Protein & Year \\
\\
\midrule
1 & 0.0025 $\pm$ 0.0007 & 0.079 $\pm$ 0.0020 &  3.997 $\pm$ 0.0762 & 0.600 $\pm$ 0.0470 & 4.601 $\pm$ 0.0485 & 9.971 $\pm$ NA \\
2 & 0.0026 $\pm$ 0.0006 & 0.082 $\pm$ 0.0045 & 4.007 $\pm$ 0.0707  & 0.605 $\pm$ 0.0469 & 4.660 $\pm$ 0.0397 & 10.013 $\pm$ NA\\
3 & 0.0034 $\pm$ 0.0010 & 0.083 $\pm$ 0.0076 & 4.014 $\pm$ 0.0817 & 0.608 $\pm$ 0.0482 & 4.692 $\pm$ 0.0517 &  10.025 $\pm$ NA\\
4 & 0.0034 $\pm$ 0.0007 & 0.085 $\pm$ 0.0061 & 4.057 $\pm$ 0.0812 & 0.612 $\pm$ 0.0468 & 4.746 $\pm$ 0.0872 &  10.041 $\pm$ NA\\
5 & 0.0038 $\pm$ 0.0005 & 0.087 $\pm$ 0.0062 & 4.081 $\pm$ 0.0886 & 0.619 $\pm$ 0.0428 & 4.797 $\pm$ 0.1122 & 10.064 $\pm$ NA\\
10 & 0.0042 $\pm$ 0.0005 & 0.090 $\pm$ 0.0078 & 4.153 $\pm$ 0.0622 & 0.636 $\pm$ 0.0353 & 4.946 $\pm$ 0.1312 & 10.108 $\pm$ NA \\
\bottomrule\\
\end{tabular}

}
\end{table*}

\begin{table*}[!ht]
\caption{Test log-likelihood (UCI datasets) from DP-SEP at different $C$}
\label{tab:test_ll_clip_dpsep}
\vskip 0.1in
\centering
\scalebox{0.75}{
\begin{tabular}{l c c c c c c}
\toprule
\multirow{5}{*}{$C$} & 
\multicolumn{5}{c}{Avg. Test log-likelihood and Std. }  \\
\midrule

 & Naval  & Kin8nm & Power & Wine & Protein & Year \\
\\
\midrule
1 & 4.545 $\pm$ 0.2675 & 1.116 $\pm$ 0.0195 & -2.805 $\pm$ 0.0173 & -0.903 $\pm$ 0.0651 & -2.945 $\pm$ 0.0115 & -3.974 $\pm$ NA \\
2 & 4.535 $\pm$ 0.2253 & 1.108 $\pm$ 0.0167 & -2.835 $\pm$ 0.0289 &  -0.907 $\pm$ 0.0651 & -2.968 $\pm$ 0.0160 & -4.001 $\pm$ NA\\
3 & 4.443 $\pm$ 0.2373 & 1.098 $\pm$ 0.0196 & -2.845 $\pm$ 0.0372 & -0.911 $\pm$ 0.0644 & -2.984 $\pm$ 0.0227 & -4.015 $\pm$ NA\\
4 & 4.430 $\pm$ 0.2895 & 1.093 $\pm$ 0.0215 & -2.862  $\pm$ 0.0812 & -0.915 $\pm$ 0.0621 & -3.020 $\pm$ 0.0366 & -4.026 $\pm$ NA\\
5 & 4.360 $\pm$ 0.2927 & 1.091 $\pm$ 0.0237 & -2.886 $\pm$ 0.0886 &  -0.926  $\pm$ 0.0549 & -3.064 $\pm$ 0.0698 & -4.042 $\pm$ NA\\
10 & 4.242 $\pm$ 0.3607 & 1.069 $\pm$ 0.0352 & -2.915 $\pm$ 0.0622 &  -0.932 $\pm$ 0.0482 & -3.096 $\pm$ 0.0881 & -4.099 $\pm$ NA\\
\bottomrule\\
\end{tabular}

}
\end{table*}


 



\end{document}